\DeclareMathOperator{\Tr}{Tr}
\newcommand{\x}{\mathbf{x}}
\newcommand{\z}{\mathbf{z}}
\newcommand{\y}{\mathbf{y}}
\newcommand{\uu}{\mathbf{u}}
\newcommand{\vb}{\mathbf{v}}
\newcommand{\f}{\mathbf{f}}
\newcommand{\g}{\mathbf{g}}
\newcommand{\R}{\mathbb{R}}
\newcommand{\C}{\mathbf{C}}
\newcommand{\I}{\mathbf{I}}
\newcommand{\A}{\mathbf{A}}
\newcommand{\W}{\mathbf{W}}
\newcommand{\q}{\mathbf{q}}
\newcommand{\mub}{\boldsymbol{\mu}}
\newcommand{\Sigmab}{\boldsymbol{\Sigma}}
\newcommand{\Qb}{\mathbf{Q}}
\newcommand{\n}{\mathbf{n}}
\newcommand{\0}{\mathbf{0}}
\renewcommand{\texttt}[1]{#1}
\newtheorem{theorem}{Theorem}
\newtheorem{corollary}{Corollary}
\title{Binary Independent Component Analysis: A  Non-stationarity-based Approach}
\author[1,2]{\href{mailto:<antti.hyttinen@helsinki.fi>?Subject=Your UAI 2022 paper}{Antti Hyttinen}{}}
\author[1,2,3]{\href{mailto:<vitoria.barin-pacela@mila.quebec>?Subject=Your UAI 2022 paper}{Vitória Barin-Pacela}{}}
\author[1]{\href{mailto:<aapo.hyvarinen@helsinki.fi>?Subject=Your UAI 2022 paper}Aapo~Hyv\"arinen}
\affil[1]{%
    Department of Computer Science\\
University of Helsinki\\
   Helsinki, Finland
}
\affil[2]{%
    Helsinki Institute for Information Technology, Finland
}
\affil[3]{%
    Mila\\
    Universit\'e de Montr\'eal\\
    Montr\'eal, Canada
  }
\begin{document}
\maketitle

\begin{abstract}
We consider independent component analysis of binary data. While fundamental in practice, this case has been much less developed than ICA for continuous data. We start by assuming a linear mixing model in a continuous-valued latent space, followed by a binary observation model. Importantly, 
we assume that the sources are non-stationary; 
this is necessary since any non-Gaussianity would essentially be destroyed by the binarization.
Interestingly, the model allows for closed-form likelihood by employing the cumulative distribution function of the multivariate Gaussian distribution. In stark contrast to the continuous-valued case, we prove non-identifiability of the model with few observed variables; our empirical results imply identifiability when the number of observed variables is higher. We present a practical method for binary ICA that uses only pairwise marginals, which are faster to compute than the full multivariate likelihood. 
Experiments give insight into the requirements for the number of observed variables, segments, and latent sources that allow the model to be estimated.
\end{abstract}

\section{Introduction}

Despite significant progress in both linear and nonlinear ICA in recent years~\citep{tcl, hyvarinen19, Khemakhem2019}, ICA for binary data remains a challenging and important problem as binary data is abundant in various fields, such as bioinformatics, health informatics, social sciences, natural language, and electrical engineering. An ICA model for binary data may also open new opportunities in solving problems closely related to ICA, such as
causal discovery~\citep{Shimizu06JMLR} and feature extraction~\citep{tcl}.

Methods for binary ICA have been proposed based on either binary or continuous-valued independent components. In the case of binary components, \citet{Himberg01} and \citet{nguyen10} assumed an OR mixture model.  
In addition, some extensions of Latent Dirichlet can be seen as binary ICA \citep{Podosinnikova15, Buntine05}.
On the other hand, \cite{kaban06} presented an approach based on a latent linear model and binarized observations, although the components were restricted to the unit interval, which limits its applicability.
Recently, \citet{Khemakhem2019} presented a nonlinear ICA model (iVAE) that can employ binarized observations, making several contributions that we can build on.

Our goal is to study the prospects of ICA for binary data using a model that is both theoretically analyzable and intuitively appealing. 
It is crucial to investigate the identifiability of such a model, and to have a consistent estimator which is not based on approximations whose validity are not clear.
None of the approaches above fulfills all of these criteria.\footnote{As noted in the Corrigendum of \citet{Khemakhem2019} (v4 on arXiv), their initial identifiability proof for a discrete non-linear ICA model is incorrect.}

We propose a binary ICA model  
inspired by recent developments in nonlinear ICA.
We formulate a latent linear model with a separate binarizing measurement equation. 
Crucially, we assume the components to be non-stationary, which is a powerful principle and very useful here because any non-Gaussianity (commonly employed in ICA) may be destroyed by binarization.
Thus, we obtain a binary ICA model whose likelihood can be described in closed-form
via the multivariate Gaussian cumulative distribution function. We further propose to combine the likelihood with a moment-matching approach to obtain a fast and accurate estimation algorithm. In fact, due to the model structure, pairwise marginal distributions of non-binarized data can be accurately estimated from the binary data and the likelihood can be computed directly from them. We investigate the identifiability of the model, and somewhat surprisingly, we show that low-dimensional models are in fact non-identifiable---while higher-dimensional models are (empirically) shown to be identifiable. 

\section{A MODEL FOR BINARY ICA}
\label{model}

In this section, we define a binary counterpart of the linear ICA model. In particular, we consider here a model based on non-stationarity of the components, and start by motivating such an approach.

\subsection{The approach of non-stationarity}

While often non-stationarity is considered a nuisance, in the theory of ICA it is well-known that a suitable non-stationarity of the independent components can be very useful. \citet{Pham01} already used it in the case of linear ICA, and \citet{tcl} extended the idea to nonlinear ICA. Note that the mixing is assumed stationary, and the non-stationarity is a statistical property of the components only.

In line with such literature, we assume the $n$-dimensional data is divided into $n_u$ segments which express the non-stationarity, i.e.\ the segments have different distributions. In the case of time series, we may be able to find such segmentation simply by taking time bins of equal sizes. Such non-stationarity based on a segment-wise (piece-wise stationary) model is well-known in linear ICA \citep{Pham01,JSSv076i02}. Formally, each data point has a segment index $u$ assigned to it. 

In fact, this setting is more general and it is not necessary to have time-series. The additionally "observed" variable $u$ makes the non-stationarity a special case of the auxiliary variable framework of \citet{Khemakhem2019}. 
It is thus not only natural in the case of non-stationary time series, but also when there is any other external discrete variable, such as the experimental condition or intervention, or even a class label that modulates the distribution of the data.

The motivation for such a non-stationary model is that it can greatly extend the identifiability of ICA. 
Linear ICA is identifiable if the components are simply non-Gaussian, which is why the utility of non-stationarity in that context has always been dubious and such algorithms are rarely used. However, in the case of \textit{non}linear ICA, non-Gaussianity does not enable identifiability, which may be intuitively clear since a nonlinear transformation can change the marginal distributions quite arbitrarily from non-Gaussian to Gaussian or vice versa. A major advance was in fact obtained by \citet{tcl}, who showed that non-stationarity does enable identifiability in the nonlinear case.

Here, we propose that using non-stationarity of the components is very useful in the case of binary data as well. Again, intuitively, non-Gaussianity is likely to be rather useless since the binarization destroys any detail about the non-Gaussianity of the distributions, and such a model would be unlikely to be identifiable. However, non-stationarity is \textit{not} destroyed by binarization. Thus, binary ICA can be estimated based on non-stationarity of the components, as we will show later in this paper.

\subsection{Formal model definition}

To define the model in detail, we assume the $n$-dimensional data is generated from $n_z$ latent variables (independent components, or sources), collected into a latent random vector $\mathbf{z}^u$, which are generated independently of each other from a Gaussian distribution. Crucially, the parameters of the Gaussian distribution change as a function of the segment as
$$
\mathbf{z}^u \sim \mathcal{N}(\mub_\z^{u},\Sigmab_\z^{u})
$$
where $\Sigmab^u_\z$ is a diagonal matrix of the source
variances in segment $u$.

We define ``intermediate'' variables $\mathbf{y}^u$ which are a linear mixing of the sources by a mixing matrix $\A$ with $n$ rows and $n_z$ linearly independent columns
\begin{eqnarray}
\mathbf{y}^u = \mathbf{A} \mathbf{z}^u \sim \mathcal{N}(\A \mub_\z^{u}, \ \A \Sigmab_\z^{u} \A^{\intercal}). \label{eq:mixing}
\end{eqnarray}
Here the mixing matrix $\A$ is constant, i.e., stationary, over the segments $u$ \citep{Pham01}.

While some work in ICA considers noisy continuous observations by adding noise to $\mathbf{y}^u$, we can consider here binarized observations $\mathbf{x}^u$ instead.
The binarization is done using a linking function $\sigma$ so that the probability of $i$th element of $\mathbf{x}^u$ being 1 is:
$$
P(x_i^u=1) = \sigma(y_i^u).
$$

We use a linking function based on the Gaussian CDF (cumulative distribution function):
$$
\sigma(y_i^u) = \Phi \left(\sqrt{\frac{\pi}{8}} y_i^u \big| 0,1 \right)
$$
where $\Phi$ is the cumulative distribution function of the Gaussian distribution, here with mean $0$ and variance $1$. We use $\sqrt{\pi/8}$ as the coefficient to match closely to the  sigmoid function $\sigma(y_i) = \frac{1}{1+e^{-y_i}}$~\citep{waissi,sigmoidtrick}, which is standardly used in statistics and machine learning in similar linking contexts.

We directly allow for different coefficients instead of $\sqrt{\pi/8}$, but our estimation methods assume that the linking function has the particular form. The motivation is to allow for closed-form expressions of the Gaussian integrals involved in Section~3 in terms of the Gaussian CDF. The difference to the logistic function is very small, while the methods are much simpler with the used linking function. In fact, our ICA model allows for closed-form likelihood with this particular linking function (Section~3), which would be difficult to achieve with a logistic linking function.

Furthermore, the linking function has the following intuitive interpretation. Take $y_i^u$, add independent noise $\epsilon$ from $\mathcal{N}(0, \frac{8}{\pi})$, and binarize $y_i^u$ simply by a hard threshold 0 to get $x_i^u$. This gives the same distribution for $x_i^u$, since the probabilities match:
\begin{eqnarray*}
P(x_i^u=1) = P( y_i^u + \epsilon > 0 ) = P(\epsilon > -y_i^u)\\
= \int_{-y_i^u}^{\infty} \mathcal{N}\left( \epsilon \big| 0, \frac{8}{\pi} \right) d \epsilon
=\Phi\left(\sqrt{\frac{\pi}{8}} y_i^u \big\vert 0,1 \right).
\end{eqnarray*}

A binary ICA model  $\mathcal{M}=(\A,\{\mub_\z^u\}_u,\{\Sigmab_\z^u\}_u)$ thus consists of the following parameters: the mixing matrix $\A$, the means $\mub_\z^u$ and the diagonal (co)variance matrices $\Sigmab_\z^u$ for all segments $u$, denoted by $\{\mub_\z^u\}_u$ and  $\{\Sigmab_\z^u\}_u$. Consequently, it defines a distribution for a binary vector $\x^u$ in each segment indexed by $u$.

\section{THE LIKELIHOOD}

A surprising observation regarding the the latent variable model defined in Section~2 is that we can calculate the likelihood in closed-form by employing the multivariate Gaussian CDF. For example, the model
 defines the probability of the data vector of all ones, denoted by $\mathbf{1}$, as:
\begin{eqnarray}
&& P(\x^u=\mathbf{1}|\mathcal{M})
=\int P(\x^u=\mathbf{1}|\y^u) P(\y^u|\mathcal{M})d \y \nonumber\\
   && =  \int\Phi \left(\sqrt{\frac{\pi}{8}} \y^u \vert \mathbf{0}, \I \right) \mathcal{N}(\y^u\vert \A \mub_\z^{u}, \A \Sigmab_\z^{u} \A^{\intercal})d \y \nonumber 
\end{eqnarray}
where the univariate Gaussian CDFs are written as a multivariate Gaussian CDF $\Phi$ with an identity covariance matrix. 
The benefit of using a Gaussian CDF-based linking function comes into play here, as the value of the integral is directly a value of a multivariate Gaussian CDF \citep{waissi,sigmoidtrick}: The above formula actually specifies the probability of first drawing $\y^u$, multiplying it by $\sqrt{\pi/8}$, and then, independently, drawing a standard Gaussian variable $\n \sim \mathcal{N}(\0,\I)$ that is element-wise smaller. We therefore have:
\begin{eqnarray*}
P(\x^u=\mathbf{1}|\mathcal{M}) 
=P\left( \n - \sqrt{\frac{\pi}{8}} \y^u  < \mathbf{0} \right) 
\end{eqnarray*}
This motivates us to define a random vector $\q^u$, an important construct in the following developments, as:
\begin{eqnarray}
\q^u&=& \n -\sqrt{\frac{\pi}{8}}\y^u, \label{eq:q}
\end{eqnarray}
which is simply a noisy, re-scaled and sign-flipped version of the linear mixture $\mathbf{y}^u$. 
In fact, since $\q^u$ is the sum of two independent Gaussian random vectors, it also has a Gaussian distribution $\q^u  \sim  \mathcal{N}\left(\mub_{ \q}^u , \Sigmab_{ \q}^u \right)$ with:
\begin{eqnarray}
\mub_{\q}^u&=& -\sqrt{\frac{\pi}{8}} \A \mub_\z^u, \label{eq:muq}\\
\Sigmab_{ \q}^u &=& \I + \frac{\pi}{8} \A \Sigmab_\z^u \A^{\intercal}. \label{eq:sigma} \label{eq:sigmaq}
\end{eqnarray}
The probability of the data vector of ones in segment $u$ is, then:
\begin{eqnarray}
P(\x^u=\mathbf{1}|\mathcal{M}) = P\left( \q^u < \mathbf{0} \right) =
\Phi\left(\mathbf{0} \vert\mub_{ \q}^u, \Sigmab_{ \q}^u \right), \label{eq:prob}
\end{eqnarray}
where the cumulative distribution function of the multivariate Gaussian $\Phi$ has all variables integrated from $-\infty$ to $0$; it is readily implemented in basic packages~\citep{mvnorm}.

Similar derivation gives the probabilities for other assignments to $\x^u$. These probabilities can be expressed compactly for all value assignments as:
\begin{equation}
    P(\x^u \vert \mathcal{M}) = \Phi \left(l(\x^u ), u(\x^u ) \vert \mub_{ \q}^u, \Sigmab_{ \q}^u \right) \label{eq:probxu}
\end{equation}
in which the multivariate Gaussian probability density function is integrated from the lower bound $l(\x^{u}) $ to the upper bound $u(\x^{u})$, with the $i$th elements in the bounds defined by:
\begin{eqnarray*}
l(\x^u)[i] = \begin{cases}-\infty  \text{ if } x^u_i=1 \\
\quad\,0 \text{ otherwise}
\end{cases}
u(\x^u)[i]=  \begin{cases}\;\,0  \text{ if } x^u_i=1 \\
\infty \text{ otherwise}
\end{cases}
\end{eqnarray*}

\begin{figure}
    \centering
    \includegraphics[scale=0.25,trim={16cm 9cm 16cm 12cm},clip]{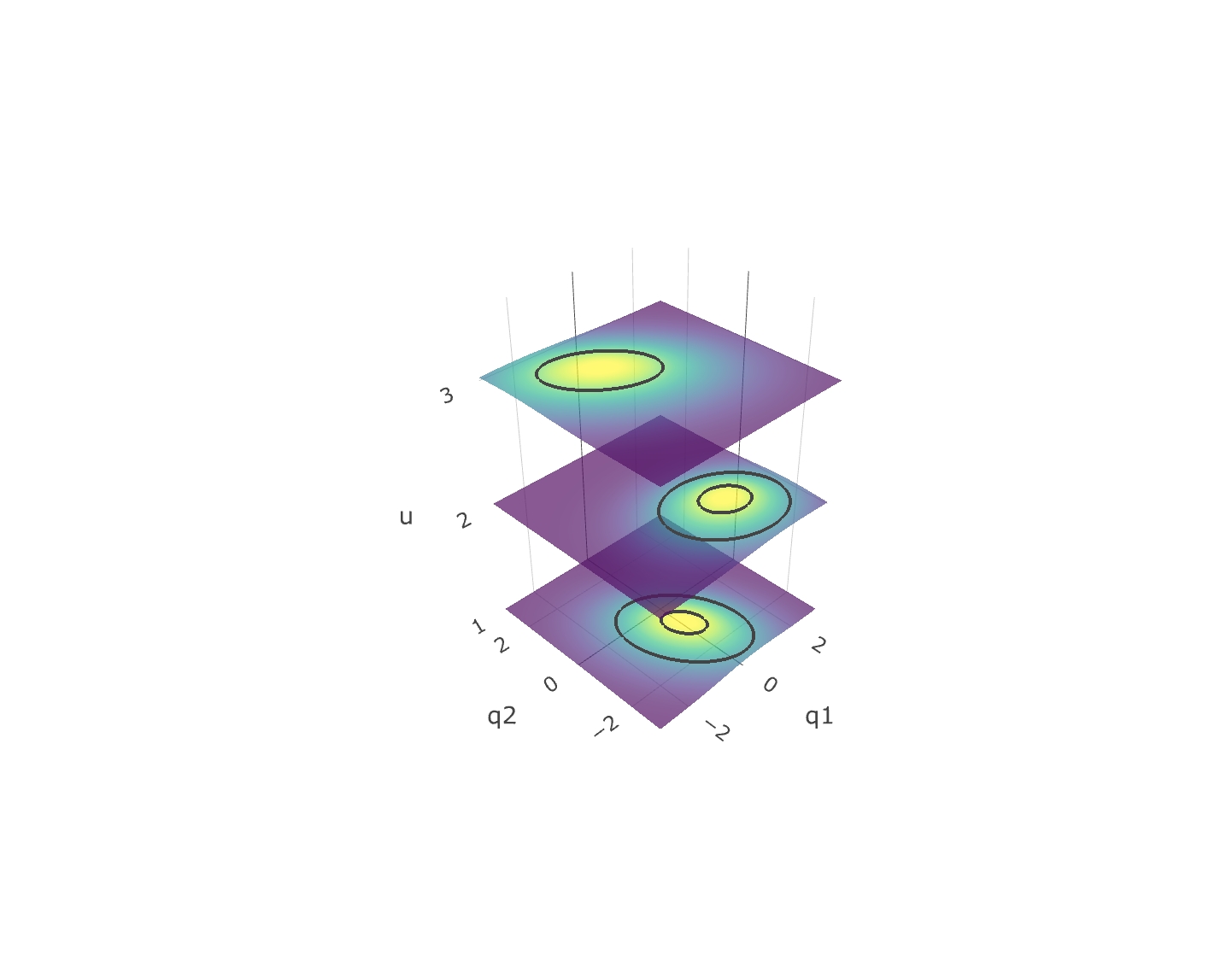}
\caption{Binary ICA model for two observed variables and three segments. For each segment, there is a bivariate Gaussian distribution on $\q^u$, the probability of an assignment to the binary observed variables is the probability mass in the corresponding quadrant.\label{fig:intuition}}
\end{figure}

Importantly, this formulation allows for a particularly clear intuitive interpretation of the model. Figure~\ref{fig:intuition} shows this for two observed variables and three segments. For each segment, the model defines a bivariate Gaussian distribution for $\q^u$, depicted by colors and contours on the planes. The probability for an assignment of the observed binary variables $\x^u$ in a segment is simply the probability mass in a corresponding quadrant. The multivariate Gaussian distributions for $\q^u$ in each segment are related in the sense that they are formed by the same mixing matrix performing on independent sources particular to the segment.

The log-likelihood of the whole data set can then be calculated as

\begin{eqnarray}
l
&=&
\sum_u
\sum_{\x^u} \hspace{-1mm} c(\x^u)\log \Phi(l(\x^u), u(\x^u) \vert \mub_{ \q}^u, \Sigmab_{\q}^u), \label{qdistr} \label{eq:likelihood}
 \end{eqnarray}
 where $c(\x^u)$ is the count of the data points with assignment $\x^u$ in a segment $u$
 and the sum is taken over all assignments to $\x^u$ and $u$.

\section{ON IDENTIFIABILITY}

Many ICA models can only be identified up to scaling and permutation indeterminacies of the sources~\citep{ICAbook,Khemakhem2019}. Straightforwardly we can see that those limitations apply for our model as well. By re-ordering columns of the mixing matrix and the sources, the implied distribution is unaffected; similarly, we can counteract the scaling (or sign-flip) of the mixing matrix columns by scaling (or sign-flipping) the sources. However, binarization actually induces additional indeterminacies as we will show next. 

\subsection{The Binarization Indeterminacy}

Recall that the probability of an assignment to binary $\x^u$ is given by the probability of the Gaussian $\q^u$ landing in different regions (Equation~\ref{eq:prob}).
But note that the probability in Equation~\ref{eq:prob} stays exactly the same even if $\q^u$ is multiplied by a diagonal matrix $\Qb^u$, possibly different for each segment $u$, with positive entries (scaling factors) on the diagonal:
\begin{eqnarray*}
P\left( \q^u  < \mathbf{0} \right) &=&P\left( \Qb^u  \q^u < \mathbf{0} \right).
\end{eqnarray*}
This is valid even if the elementwise operator is $>$ or a mixture of $>$ and $<$.\footnote{For the probability of $\x^u$ being all ones, any permutation matrix $\Qb^u$ would similarly preserve the implied probability, but the probability of some other assignment for $\x$ (each of which corresponds to some mixture of $>$ and $<$) may change then.}
Figure~\ref{fig:indet} shows an example of this equivalence relation for one segment and two observed variables. The two Gaussian distributions for $\q^u$ represented by the blue and red contours imply the exact same joint distribution for binary observed variables $\x^u$. The amount of mass in each of the 4 quadrants is exactly the same.  This means that we essentially lose all scale information on $\q^u$ in the binarization.

Then, two binary ICA models $\mathcal{M}=(\A,\{\mub_\z^u\}_u,\{\Sigmab_\z^u\}_u)$ and $\hat{\mathcal{M}}=(\hat{\A},\{\hat{\mub}_\z^u\}_u,\{\hat{\Sigmab}_\z^u\}_u)$ are indistinguishable if there are positive diagonal matrices $\{\Qb^u\}_u$ such that for each segment $u$, the means and covariances of $\q^u$ satisfy: 
\begin{eqnarray}
 \hat{\mub}_{\q}^u &=& \Qb^u \mub_{\q}^u \label{eq_1}, \\
\hat{\Sigmab}_{ \q}^u  &=& \Qb^u\Sigmab_{\q}^u \Qb^u,
\label{eq_2}
\end{eqnarray}
which can be written more clearly using the model parameters (Equations~\ref{eq:muq}  and~\ref{eq:sigmaq}) as:
\begin{eqnarray}
    \sqrt{\dfrac{\pi}{8}}\hat{\A} \hat{\mub}_\z^u&=& \Qb^u \sqrt{\dfrac{\pi}{8}} \A \mub_\z^u,
\label{eq_1v} \label{arithmetic1}\\
    \I + \dfrac{\pi}{8} \hat{\A} \hat{\Sigmab}_\z^u (\hat{\A})^\intercal &=& \Qb^u(\I + \dfrac{\pi}{8} \A \Sigmab_\z^u \A^\intercal )\Qb^u.
\label{eq_2v} \label{arithmetic2}
\end{eqnarray}
This limits identifiability possibilities (Section~4.2) but nevertheless also allows for the development of efficient estimation procedures in Sections~4.3 and~5.2.

\begin{figure}
    \centering
    \includegraphics[scale=0.50]{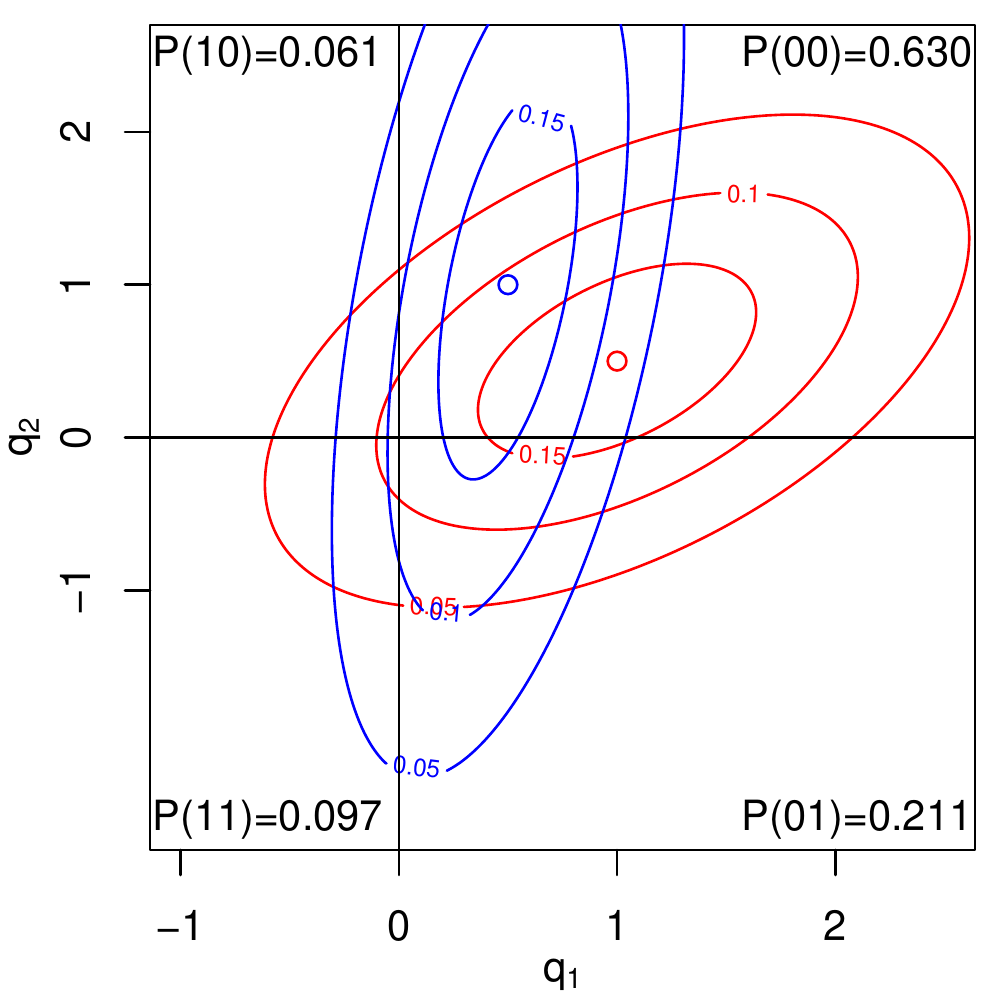}
\caption{Two Gaussian distributions (red and blue) for a two dimensional $\q^u$ which imply the same binary distributions after binarization by the linking function. That is because the mass of both distributions in each of the 4 quadrants is identical.\label{fig:indet} }
\end{figure}

\subsection{The Row Order Indeterminacy}

One of the consequences of the binarization indeterminacy is the following non-identifiability result concerning $n=2$ observed variables, proven in Appendix A in the supplement.
\begin{theorem}
If the row order of the 2-by-2 mixing matrix $\A$ of a binary ICA model is reversed, then the source means $\mub^u_\z$ and variances $\Sigmab^u_\z$ can be adjusted such that the implied distributions for the observed binary $\x^u$ remain identical.
\end{theorem}

This means that in addition to column order and scale, we also have row order indeterminacy here. Although the result may generalize to certain sparse higher dimensional models, fortunately, it does not jeopardize the estimation of higher dimensional models in general. 

This result does have consequences for 
causal discovery~\citep{Shimizu06JMLR,suzuki2021causal,peters2011causal,inazumi2014causal}.
Consider two structural equation models, implying opposite causal directions:
$$
\y^u= \left( \begin{array}{cc}
0 & 0\\
b & 0
\end{array}\right)\y^u + \z^u, \quad \y^u:= \left( \begin{array}{cc}
0 & b\\
0 & 0
\end{array}\right)\y^u + \z^u.
$$
where $\z^u$ has a Gaussian distribution in each segment $u$ with diagonal covariance matrix $\Sigmab_\z^u$.  The models 
correspond respectively to the mixing models (compare to Equation~\ref{eq:mixing}):
$$
\y^u=\left( \begin{array}{cc}
1 & 0\\
b & 1
\end{array}\right)\z^u, \quad \y^u=\left( \begin{array}{cc}
1 & b\\
0 & 1
\end{array}\right)\z^u.
$$
If we observed binarized $\y^u$, i.e. $\x^u$, we can at most identify the mixing matrix up to row order, column order and column scale.
By switching the column order and then the row order of the mixing matrix on the left, we get the mixing matrix on the right. Thus, unlike in the continuous case, we cannot detect the causal direction between two variables without further limiting assumptions or information on other variables.

\subsection{The Correlation Identifiability}

Note that the indistinguishable models satisfying Equation~\ref{eq_2} or Equation~\ref{eq_2v} have equal \emph{correlation matrices} (i.e.\ matrices of Pearson correlation coefficients) for the random variables $\q^u$. 
The next theorem  and corollary show that the correlations between elements of $\q^u$ are indeed theoretically identifiable from the distributions of the binary observed variables $\x^u$. Intuitively, the higher the correlation, the more likely will the pair of binary observed variables in $\x^u$ receive equal assignments. The fairly technical proof is given in Appendix B in the supplement.

\begin{theorem}
Two binary ICA models imply different distributions for binary observations $\x^u$ (in a given segment $u$) if the correlation matrices for $\q^u$ are not equal.
\end{theorem}

This result is crucial for the development of our novel estimation method (Section~5.2), via the corollary:

\begin{corollary} \label{corollary1}
The correlation matrix of $\q^u$ in a given segment $u$ is identifiable from the distribution for binary $\x^u$.
\end{corollary}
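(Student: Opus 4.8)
The plan is to obtain Corollary~\ref{corollary1} as the immediate contrapositive of Theorem~2, since identifiability of the correlation matrix of $\q^u$ from the distribution of $\x^u$ is, by definition, the statement that two models agreeing on the distribution of $\x^u$ cannot disagree on the correlation matrix of $\q^u$. In other words, the corollary asserts that the map sending a model-induced distribution of $\x^u$ to the correlation matrix of $\q^u$ is single-valued, and Theorem~2 supplies exactly the injectivity needed for this.

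First I would fix a segment $u$ and take as given some observed distribution over the binary $\x^u$ that arises from a binary ICA model. By the construction in Section~3, any such model endows $\q^u$ with a Gaussian law and hence a well-defined correlation matrix, so at least one correlation matrix is consistent with the observed distribution; existence is therefore immediate. For uniqueness, I would suppose two models $\mathcal{M}$ and $\hat{\mathcal{M}}$ both induce the same distribution over $\x^u$. If their correlation matrices for $\q^u$ were unequal, Theorem~2 would force the two $\x^u$-distributions to differ, a contradiction. Hence the correlation matrices coincide, and the correlation matrix is uniquely determined by—that is, identifiable from—the distribution of $\x^u$.

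The hard part is not in the corollary itself but has already been discharged in Theorem~2; here the only thing to be careful about is the logical direction, explicitly negating both sides of the implication in Theorem~2. It is worth emphasizing that identifiability holds only at the level of \emph{correlations}, not full covariances: this is exactly consistent with the binarization indeterminacy of Section~4.1, where scaling $\q^u$ by a positive diagonal matrix $\Qb^u$ leaves the induced binary distribution unchanged while altering the covariance $\Sigmab_{\q}^u$ but preserving its correlation matrix. Thus I expect essentially no computational obstacle in the corollary, with all analytical content residing in the preceding theorem.
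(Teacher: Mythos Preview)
Your proposal is correct and matches the paper's own treatment: the paper presents Corollary~\ref{corollary1} as an immediate consequence of Theorem~2 without a separate proof, and the reasoning it implicitly relies on is precisely the contrapositive argument you spell out. Your additional remarks about existence and about the consistency with the binarization indeterminacy are accurate and helpful elaborations, but the core logical step is identical.
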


On the other hand, the following theorem recaps the well-known result \citep{ICAbook,Pham01} that the means do not help in estimating the mixing matrix  (proven in Appendix~C):
\begin{theorem}
If two models
$\mathcal{M}$
and $\hat{\mathcal{M}}$ with $n=n_z$
imply the same correlation matrices for $\q^u$ (in a given segment)
 then the means $\mub_\z^u$ can be adjusted such that the implied binary distributions are identical. \label{thm:means}
\end{theorem}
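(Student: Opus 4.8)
The plan is to reduce the statement to the binarization indeterminacy of Section~4.1, whose two conditions (Equations~\ref{eq_1} and~\ref{eq_2}) are \emph{sufficient} for identical binary distributions. The key observation is that Equation~\ref{eq_2} is precisely the assertion that $\Sigmab_\q^u$ and $\hat\Sigmab_\q^u$ share a common correlation matrix, so the hypothesis hands us Equation~\ref{eq_2} for free; all that then remains is to realize Equation~\ref{eq_1} by choosing the means appropriately.

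First I would make the link between equal correlation matrices and Equation~\ref{eq_2} explicit. Writing each covariance as $\Sigmab_\q^u = \D^u \C^u \D^u$ and $\hat\Sigmab_\q^u = \hat\D^u \C^u \hat\D^u$, where $\D^u, \hat\D^u$ are the diagonal matrices of standard deviations and $\C^u$ is the shared correlation matrix, I set $\Qb^u = \hat\D^u (\D^u)^{-1}$. This $\Qb^u$ is diagonal with strictly positive entries (ratios of standard deviations), and a one-line computation gives $\hat\Sigmab_\q^u = \Qb^u \Sigmab_\q^u \Qb^u$, i.e. Equation~\ref{eq_2} holds with this very $\Qb^u$.

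Next I would adjust the means of $\mathcal{M}$ so that Equation~\ref{eq_1} also holds with this same $\Qb^u$. Because $n = n_z$, the mixing matrix $\A$ is square with linearly independent columns, hence invertible; so for each segment $u$ I can impose $\mub_\q^u = (\Qb^u)^{-1}\hat\mub_\q^u$ and solve for the source mean via $\mub_\z^u = -\sqrt{8/\pi}\,\A^{-1}\mub_\q^u$, inverting the relation in Equation~\ref{eq:muq}. Any real vector is an admissible Gaussian mean, so this adjustment is always feasible and produces exactly Equation~\ref{eq_1}.

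With Equations~\ref{eq_1} and~\ref{eq_2} both satisfied by a single positive diagonal $\Qb^u$, the two models fall under the binarization indeterminacy and therefore imply identical distributions for $\x^u$ in each segment $u$, which is the claim. There is no genuine obstacle here: the only point to watch is that the $\Qb^u$ extracted from the correlation condition is the same matrix used to match the means, and this is automatic because the correlation hypothesis pins $\Qb^u$ down and the mean condition is then merely a linear solve enabled by the invertibility of $\A$. The conceptual crux is simply recognizing that matching correlation matrices already discharges the covariance condition, leaving only the trivially solvable mean condition --- which is exactly why the means carry no additional identifiability information about the mixing matrix.
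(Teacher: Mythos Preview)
Your proposal is correct and follows essentially the same approach as the paper: extract the positive diagonal $\Qb^u$ from the equal-correlation hypothesis (which is exactly Equation~\ref{eq_2}), then use invertibility of $\A$ (guaranteed by $n=n_z$) to solve the linear mean condition and obtain Equation~\ref{eq_1}. The paper's own proof is a terse three-line version of precisely this argument; your write-up simply makes the construction of $\Qb^u = \hat\D^u(\D^u)^{-1}$ and the back-solve for $\mub_\z^u$ explicit.
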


\section{METHODS FOR BINARY ICA}

Next, we present three methods for estimating the binary ICA model, building on the theory in Sections~3 and~4. The \texttt{BLICA} method of Section~5.2 is the main novel algorithmic contribution of the paper.

\subsection{Maximum Likelihood Estimation}

We have already derived the likelihood of the binary ICA model in Equation~\ref{eq:likelihood}. A straightforward approach is then to optimize this using e.g. L-BFGS~\citep{lbfgs}. The gradient involves the moments for the \emph{truncated} multivariate Gaussian distribution,
which can be obtained from R package \texttt{tmvtnorm}~\citep{tmvtnorm}.
Variances and scaling factors can be kept positive by using the log-exp transform. Unfortunately, the computation of the likelihood and its gradient can only be done for small models in practice, because the evaluation of the multivariate Gaussian CDF is time consuming, necessitating the use of sampling-based approximations. Our experiments refer to this as \texttt{full MLE}.

\subsection{The \texttt{BLICA} Method}

However, we can circumvent the computational burden of the high-dimensional Gaussian cumulative distribution function.
Due to the theory in Section 4, the correlations of $\q^u$ convey the essential information between the binary data and the continuous mixing model. Since the marginalization properties of our model are inherited from the multivariate Gaussian, such correlations can be estimated from \emph{pairwise} marginal distributions of elements of $\x^u$; in 2D the Gaussian cumulative distribution function is still quite quick to compute.
Thus, we combine maximum likelihood estimation with what could be called a ``moment-matching'' approach as follows.
We first recover the pairwise correlations of the continuous-valued $\q^u$ from the observed binary data on $\x^u$ (this is possible by Corollary~\ref{corollary1}) via MLE in 2D. Then we fit those correlations to the correlations implied by the latent linear mixing model using a more scalable MLE in the continuous-valued latent space. The resulting algorithm is summarized as Algorithm~\ref{alg:BLICA} and explained in detail below.

\textbf{Correlation estimation.}
On line 4, we estimate each correlations between elements in $\q^u$ separately, by directly fitting the likelihood in Equation~\ref{qdistr} in two dimensions, thus estimating $\mub_\q^u$ and $\Sigmab_\q^u$.
To calculate the multivariate Gaussian CDF, we use the R package \texttt{mvtnorm}~\citep{mvnorm}.
We employ the \texttt{GenzBretz} method, which is particularly suitable for the fast evaluation needed here~\citep{genz1993comparison}. Furthermore, the estimation can be simplified~\citep{lee}. Due to Equation~\ref{arithmetic2} the diagonal of $\Sigmab^u_\q$ can be set to 1s in this step. 
Furthermore, since the marginal of $x_i^u$ is
\begin{eqnarray}
P(x_i^u=1)&=&\Phi(-\mub^u_\q[i]/\sqrt{\Sigmab^u_\q[i,i]}|0,1), \label{eq:meanest}
\end{eqnarray}
both means in $\mub^u_\q$ can be computed from the respective marginals using the 1D inverse Gaussian CDF separately~\citep{mvnorm}. The univarite optimization problem for the remaining parameter in the interval $[-1,1]$ can then be solved efficiently using a line search method~\citep{brent2013algorithms}. The scalability of Algorithm~1 depends crucially on this step, as $n_u\cdot (n^2-n)/2$ correlations need to be estimated. The separately estimated correlations are collected to $n_u$ segmentwise $n$-by-$n$ correlation matrices denoted by $\C^u_\q$. 

\begin{algorithm}[!t]
\begin{algorithmic}[1]
\State Input data recorded at $n_u$ different segments.
\For{ segment $u \in \{1,\ldots,n_u\}$ }
\For{ each observed variable pair $\{x_i^u,x_j^u\}$ } 
\State \parbox[t]{6cm}{Estimate the correlation between $q_i^u$ and $q_j^u$ by maximizing the marginal pairwise likelihood of  $x_i^u$ and $x_j^u$ (in segment $u$).}
      \EndFor
      \State \parbox[t]{7cm}{Form and regularize the correlation matrix $\C_\q^u$ obtained from the pairwise correlations.}
\EndFor
\State Optimize scaled Gaussian likelihood  
with L-BFGS
over sufficient statistics  $\C_\q^u$ from all segments $u$.
\State Return the estimated mixing matrix $\A$ and source variances $\Sigmab^u_\z$ for all segments $u$.
\end{algorithmic}
\caption{The \texttt{BLICA} algorithm for Binary ICA.\label{alg:BLICA}}
\end{algorithm}

\textbf{Regularization.} When estimating the correlations of $\q^u$ from sample data, it can happen that a correlation matrix $\C_\q^u$ is close to singular or not positive definite. We use the following regularization on line 5, based on the parameter $r$~\citep{warton}, which marks the approximate condition number targeted. 
The regularized correlation matrix is then
\begin{equation}
\frac{1}{1+\delta} (\C_\q^u + \delta \mathbf{I}), \text{ where } \delta = \max \left( 0, \frac{\lambda_1-r\cdot \lambda_n }{ r-1} \right), \nonumber
\end{equation}
where $\lambda_1$ is the largest and $\lambda_n$ the smallest eigenvalue of $\C_\q^u$.
This regularization keeps the unit diagonal.

\textbf{Moment Matching.} 
Finally, on line 6, we fit the model parameters (including stationary $\A$) to the estimated correlations $\C^u_\q$ using a Gaussian likelihood model 
over the different segments $u$ (Section~2). But in contrast to the usual case where we have the covariance matrices, here we need to account for the ``binarization indeterminacy'', resulting in additional nuisance scaling parameters, as pointed out above. We use the term scaled Gaussian likelihood to refer to the ordinary multivariate Gaussian likelihood where we include additional parameters $\Qb^u$ as the scaling factors.
The fitting is thus done by the following scaled Gaussian likelihood based on the sufficient statistics $\C_\q^u$:
\begin{equation}
l =\sum_{u=1}^{n_u} \frac{N}{2} \left[
   -\log (\det (\Sigmab_\q^u)) - \Tr(\C_\q^u (\Sigmab_\q^u)^{-1}  ) \right] \nonumber
\end{equation} 
where recall that $\Sigmab_\q^u=\Qb^u(\mathbf{I}+\A\Sigmab_\z^u \A^T)\Qb^u$ by Equation~\ref{arithmetic2} is a function of
 the mixing matrix $\A$, 
 source variances $\{\Sigmab_\z^u\}_u$ (diagonal, positive elements) and scaling factors $\{\Qb^u\}_u$ (diagonal, positive elements). Variances and scaling factors can be kept positive by using the log-exp transform. Note that without the scaling factors $\{\Qb^u\}_u$, the mixing matrix $\A$ could be found via joint diagonalization~\citep{JSSv076i02}. Note also that due to Theorem~\ref{thm:means}, the source means do not need to be estimated.
 Here, instead, we perform the fitting by maximizing this likelihood using L-BFGS~\citep{lbfgs} with respect to the aforementioned parameters.

\subsection{Binary ICA through Linear iVAE} \label{ivae}

\citet{Khemakhem2019} presented the identifiable Variational Autoencoder (iVAE), an approach for nonlinear ICA employing variational autoencoders \citep{Kingma2014, Rezende2014} that assumes access to an additionally observed variable such that the sources are independent given the auxiliary variable; further, each source follows an exponential family distribution given the auxiliary variable. 
Here, we apply the iVAE approach to estimate the binary ICA model from Section~\ref{model} \citep{barin2021independent}. As proposed by \citet{Kingma2014} and \citet{ Khemakhem2019}, we use the factorized Bernoulli observational model and apply a sigmoid function element-wise to the output of the decoder to obtain the binary probabilities. 
Due to the linearity of our mixing model and the segment-wise structure, we can simplify the encoder (posterior approximation) of the VAE, and make all the transformations in the iVAE affine or linear, thus greatly simplifying the system. The \texttt{linear iVAE} is presented in more detail in Appendix~E.

\subsection{Estimation of the Sources}

After estimating the mixing matrix $\A$, it may be desired to estimate the sources $\z^u$ as well.
In the case of binary data, the individual source values cannot be accurately estimated (even up to scale and order indeterminacies) due to the inherent noise introduced by the binarization procedure. Presumably, though, if the number of observed variables is large and the number of sources is small, the estimation may be reasonable. In any case, the posterior $P(\mathbf{z}^u|\mathbf{x}
^u)$ can be easily calculated after estimating the mixing matrix.

\section{EXPERIMENTS}

\begin{figure*}[!t]
    \centering
\includegraphics[scale=0.42]{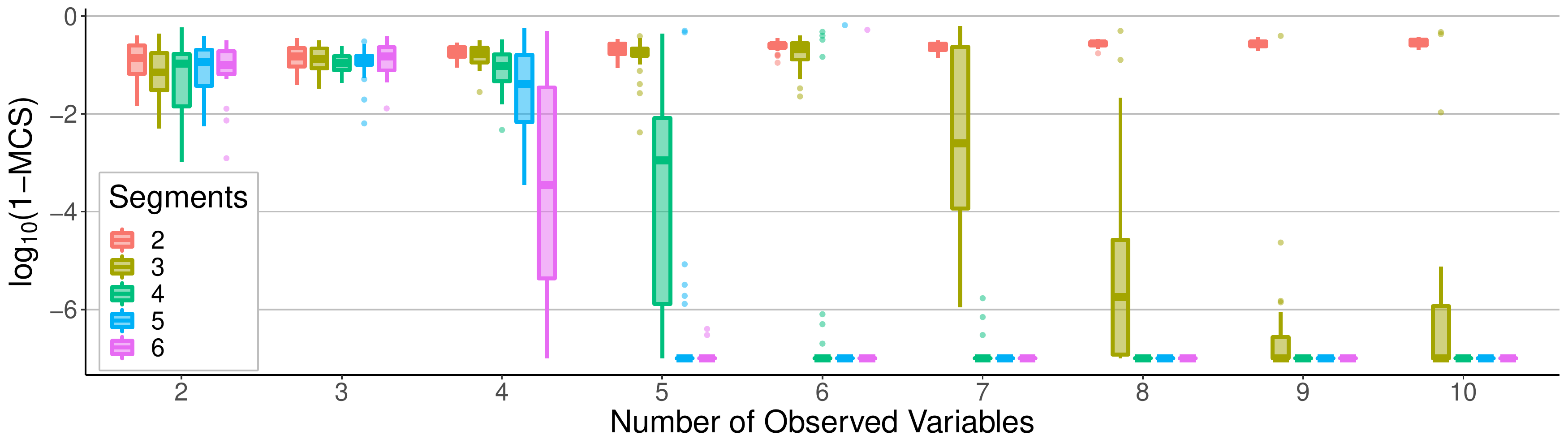}
\caption{Identifiability 
with equal number of observed variables and sources. The \texttt{BLICA} method used true (pairwise) probability distributions (i.e. infinite sample limit data). Each box is based on 30 models. 
A lower value on the y-axis (log-error) implies better performance.
Runs with values less than $-7$ (e.g. those in which the model was identified up to machine precision) are marked with $-7$. Compare to Table~1.\label{fig:idplot} }
\end{figure*}

We implemented our proposed methods and baselines using R (\texttt{BLICA}, \texttt{full MLE}) and python (\texttt{linear iVAE}). 
Here we investigate 
the identifiability of the model, as well as the finite-sample estimation performance and the scalability of our 
methods, also comparing to previous approaches. 

\textbf{Data.} The data was generated from the Binary ICA model (Section~2)
 in the following way. Means were drawn from $\mathrm{unif}(-0.5,0.5)$, standard deviations from
$\mathrm{unif}(0.5,3)$. Mixing matrix elements were drawn from $\mathrm{unif}(-3,3)$ while ensuring invertibility by resampling until the condition number ($\kappa$) was below 20 for $n<20$, or for $n\leq 20$ below the 75th quantile of 1000 sampled similar dimensional mixing matrices. For practical estimations from finite sample data we use 40 segments, varying the sample size per segment.

\textbf{Evaluation.} 
ICA methods are often compared in terms of the mean correlation coefficient of the estimated sources. Here, however, binarization induces heavy noise and individual samples of the estimated sources cannot be accurately estimated. We therefore focus our evaluation on the mixing model, 
and measure the mean cosine similarity (MCS) of the mixing matrix columns (taking the inherent order and scale indeterminacy of the sources into account, see Appendix~D).

\subsection{Identifiability}

\textbf{Results.}
Recall from Sections~4 and~5 that the correlations of $\q^u$ convey the information between the binary data and the mixing model, and each of these correlations can be determined from the marginal distributions over the corresponding pair of binary observed variables in $\x^u$(in a segment $u$).
Thus,  by using the exact pairwise binary distributions of elements of $\x^u$ from Equation~\ref{eq:probxu} as input for BLICA, 
we are here able to investigate  identifiability empirically without any finite sample effects.
Figure~\ref{fig:idplot} shows results on which models can be identified when the number of sources and observed variables are equal ($n\hspace*{-1mm}=\hspace*{-1mm}n_z$). In many cases, the method found the mixing matrix essentially up to machine precision, which can be seen as indication of identifiability.
Each box includes 30 different data generating models, and for each we ran \texttt{BLICA} 3 times; the MCS of the run with highest scaled Gaussian likelihood is plotted. 
With only 2 segments, or only 2 observed variables (also in Theorem~3), the model is not identifiable in any case. The minimal cases deemed identifiable (up to source scale and order) are $(n\hspace*{-1mm}=\hspace*{-1mm}5,n_u\hspace*{-1mm}=\hspace*{-1mm}5)$, $(n\hspace*{-1mm}=\hspace*{-1mm}6,n_u\hspace*{-1mm}=\hspace*{-1mm}4)$, $(n\hspace*{-1mm}=\hspace*{-1mm}7,n_u\hspace*{-1mm}=\hspace*{-1mm}4)$, $(n\hspace*{-1mm}=\hspace*{-1mm}8,n_u\hspace*{-1mm}=\hspace*{-1mm}4)$, $(n\hspace*{-1mm}=\hspace*{-1mm}9,n_u\hspace*{-1mm}=\hspace*{-1mm}3)$, and $(n\hspace*{-1mm}=\hspace*{-1mm}10,n_u\hspace*{-1mm}=\hspace*{-1mm}3)$.
Thus generally, the more observed variables ($n$) we have, the less segments ($n_u$) are needed.

\begin{table}[!t]
\centering
{
\setlength{\tabcolsep}{2.5pt}
\begin{tabular}{c|rrrrrrrrr}
Number of & \multicolumn{9}{c}{Number of Observed Variables ($n$)}\\
Segments ($n_u$) & $2$ & \phantom{0}$3$ & $4$ & $5$ & $6$ & $7$ & $8$ & $9$ & $10$\\ 
  \hline
$2$ & -6 & -9 & -12 & -15 & -18 & -21 & -24 & -27 & -30 \\ 
  $3$ & -7 & -9 & -10 & -10 & -9 & -7 & -4 & {\color{red}\textbf{0}} & {\color{red}\textbf{5}} \\ 
  $4$ & -8 & -9 & -8 & -5 &  {\color{red}\textbf{0}} & {\color{red}\textbf{7}} & {\color{red}\textbf{16}} & 27 & 40 \\ 
  $5$ & -9 & -9 & -6 & {\color{red}\textbf{0}} & 9 & 21 & 36 & 54 & 75 \\ 
  $6$ & \phantom{0}-10 & -9 & -4 & {\color{red}\textbf{5}} & 18 & 35 & 56 & 81 & 110 
\end{tabular}
}
\caption{Heuristic identifiability analysis. 
Each entry states the number of statistics (equations) minus the number of unknowns. 
The minimal cases 
with a non-negative number,
suggesting identifiability, are bolded in red.
\label{tab:idtable}}
\end{table}

\begin{figure*}
    \includegraphics[scale=0.36]{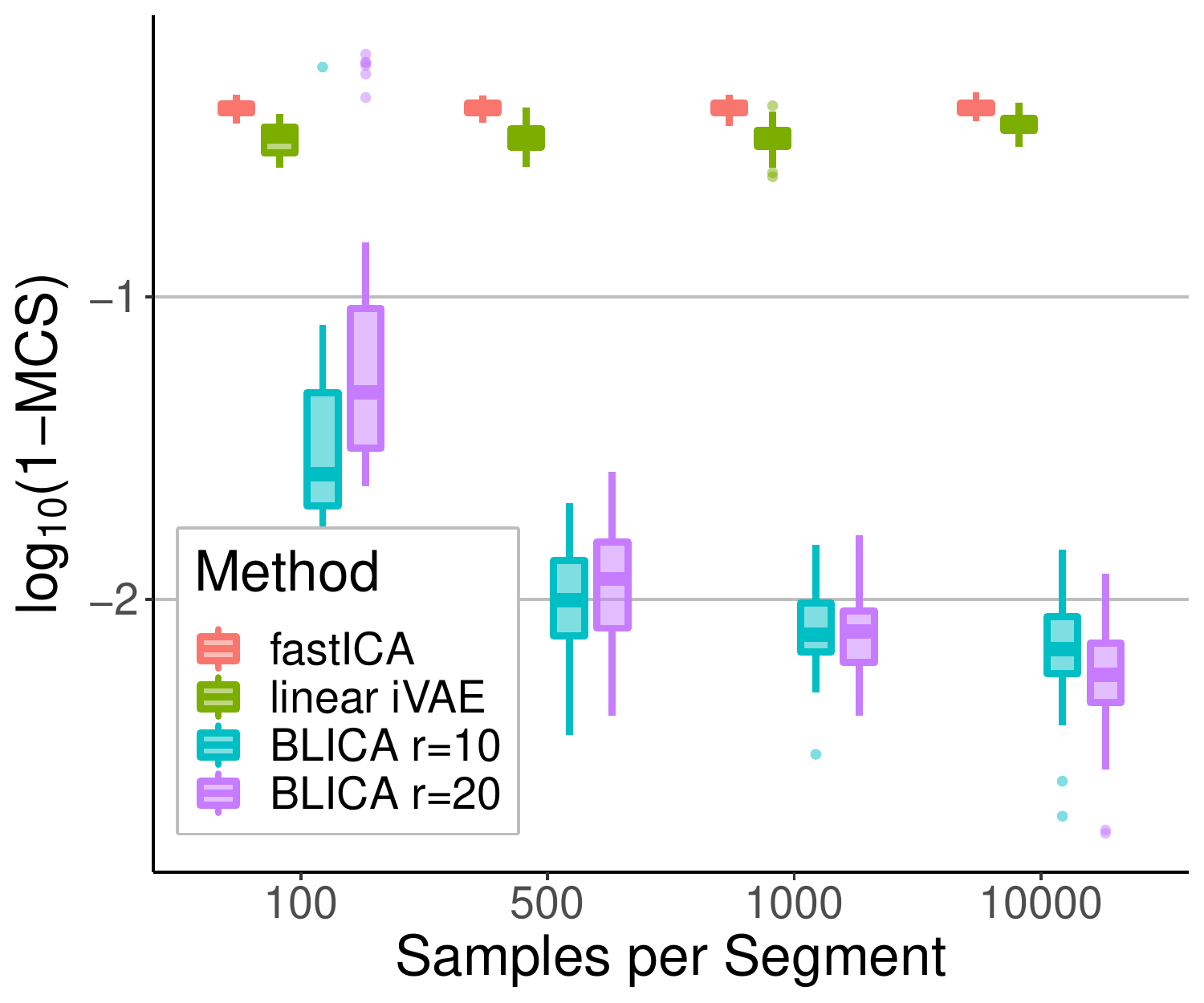}\;\includegraphics[scale=0.36]{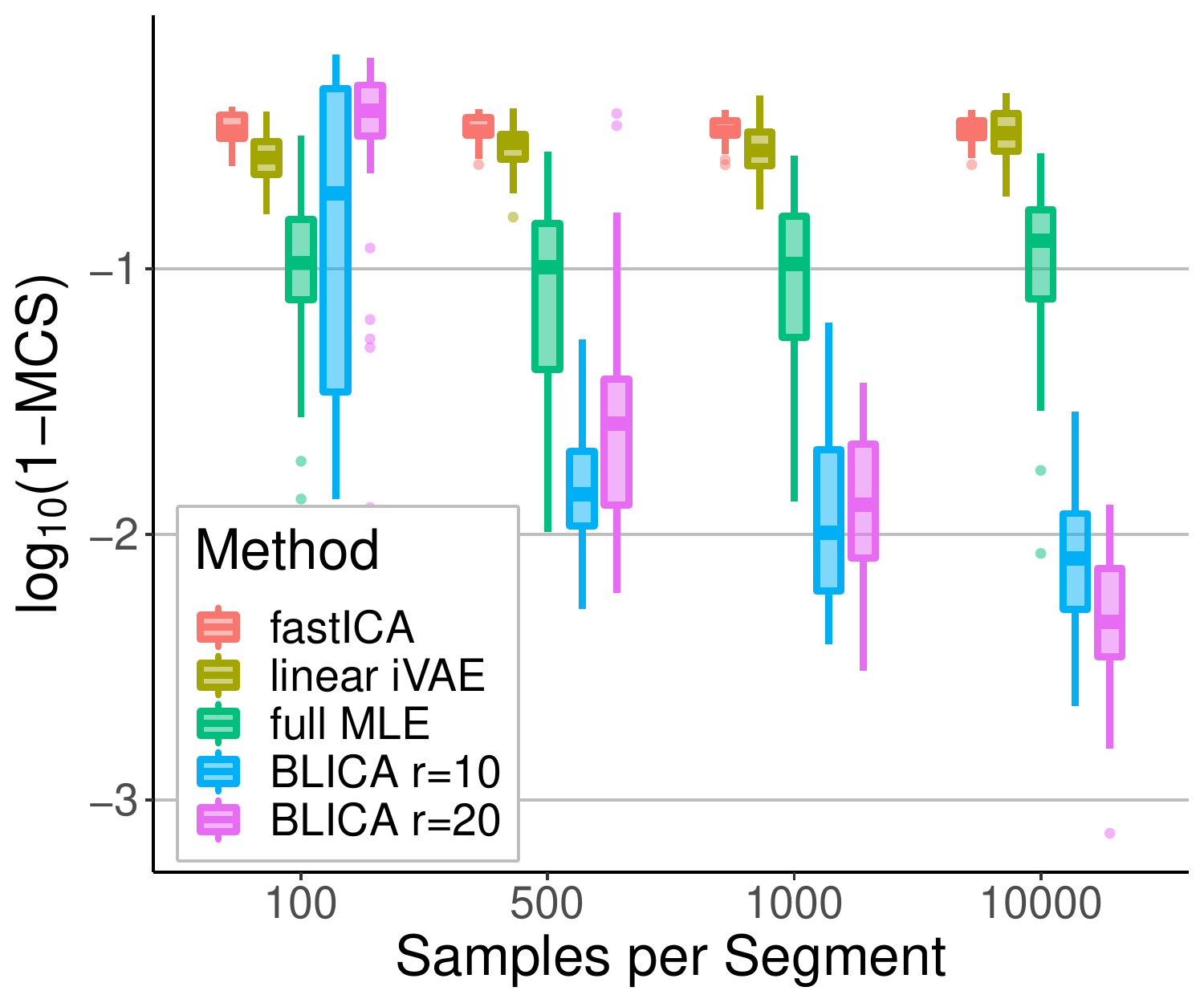}\;\includegraphics[scale=0.36]{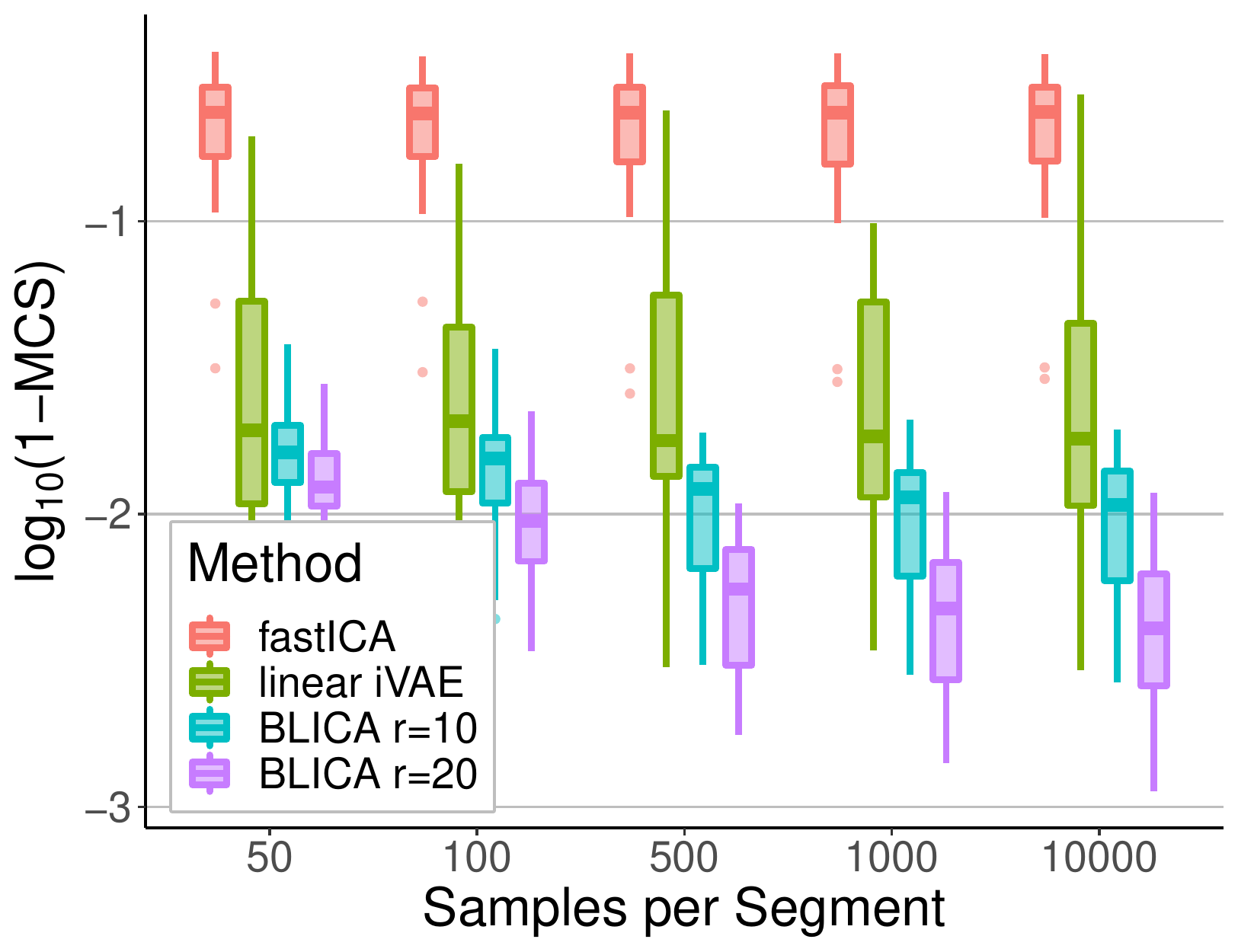}
\caption{
Finite sample performance. 
Left: 10 observed variables and 10 sources.
Center: 6 observed variables and 6 sources.
Right: 6 observed variables and 2 sources. 
Each box is based on thirty 40-segment datasets. 
\label{fig:10observed}\label{fig:comparison} }
\end{figure*}

\paragraph{Heuristic Identifiability Analysis.} We contrast 
the results to the 
well-known heuristic approach to identifiability used in factor analysis. It is based on counting the number of statistics we can calculate (or equations we can form), and the number of unknowns (parameters) we need to solve. If the 
former
is at least as large as the 
latter, there is hope that the model is identifiable.
The calculations in Table~\ref{tab:idtable} are based on Equations~\ref{arithmetic1} and~\ref{arithmetic2} when the number of sources equals the number of observations ($n=n_z$).
The statistics correspond to $n_u(n^2-n)/2$  covariances, $n_u\cdot n$ variances and $n_u\cdot n$ means (for $\q^u$). 
Unknowns include $n\cdot n$ mixing matrix coefficients, $n_u\cdot n$ (segment-wise) source variances, $n_u\cdot n$ source means, as well as $n_u\cdot n$ scaling terms (diagonal elements of $\Qb^u$).
In line with the classical literature in factor analysis, we ignore the source 
order indeterminacy. Figure~\ref{fig:idplot} and Table~\ref{tab:idtable} show a remarkably similar dependence between identifiability and the numbers of the segments and the observed variables: in particular, they agree on the minimal cases identifiable. Interestingly, cases with 2 observed variables as well as the cases with only 2 segments are never identifiable.
Note that these computational results together with Section~4 provide a bound for any future analytical results on identifiability.
If identifiability turns out to be possible in further cases, e.g., with a different mixing model or linking function, the results will need to depend on the particular parametric forms, thus limiting applicability.

\subsection{Finite Sample Estimation}

\textbf{Methods.} Next we turn our attention to estimation performance from finite sample data.
 We compare our new  \texttt{BLICA} (with different regularization parameter value $r$) method to its main competitors, \texttt{fastICA}~\citep{Himberg01,fastica} and the baseline implementations of \texttt{linear iVAE} and \texttt{full MLE}. Note that the model of \texttt{fastICA} is somewhat different, but it still employs a linear mixing of the sources and has the same sources scale and order indeterminacies; thus, \texttt{MCS} comparison is sensible. \texttt{fastICA} does not use the segment index, but pools all data from different segments.
Recall from Section~\ref{ivae} and Appendix~E that the \texttt{linear iVAE} uses essentially the same model, but instead of employing the likelihood, it optimizes the ELBO objective through L-BFGS. For runs with $n<20$ observed variables, a time budget of 2h was used, and the results that were obtained within the time limit are reported. For larger simulations, we allowed for 12h per run. To avoid local minima due to the difficult optimization landscape, we ran the \texttt{linear iVAE}, \texttt{full MLE} and \texttt{BLICA} with 3 different learning seeds and selected the best run according to the objective function (e.g. likelihood). 

\begin{figure*}[!t]
    \centering
    \includegraphics[scale=0.36]{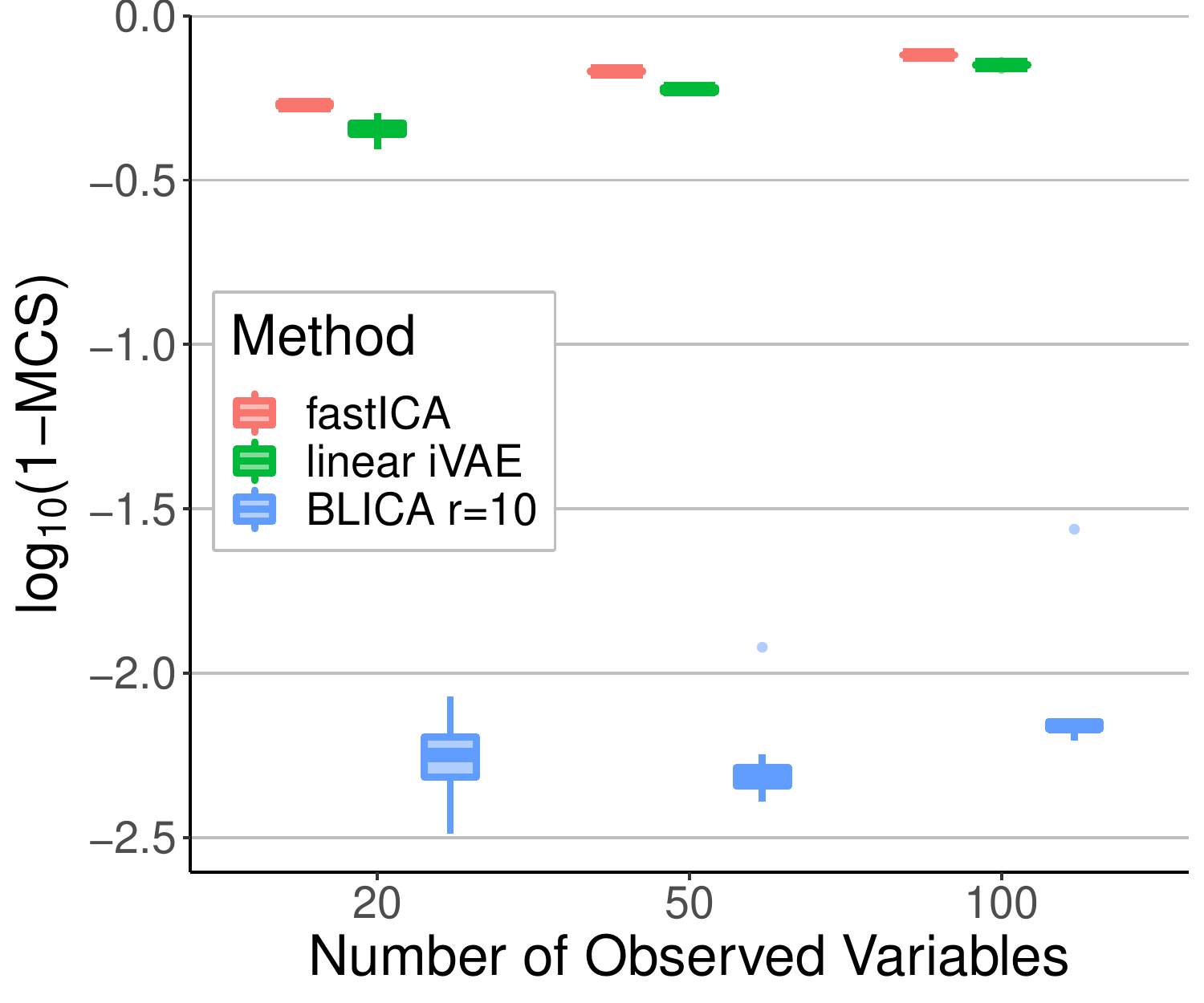}\;\includegraphics[scale=0.36]{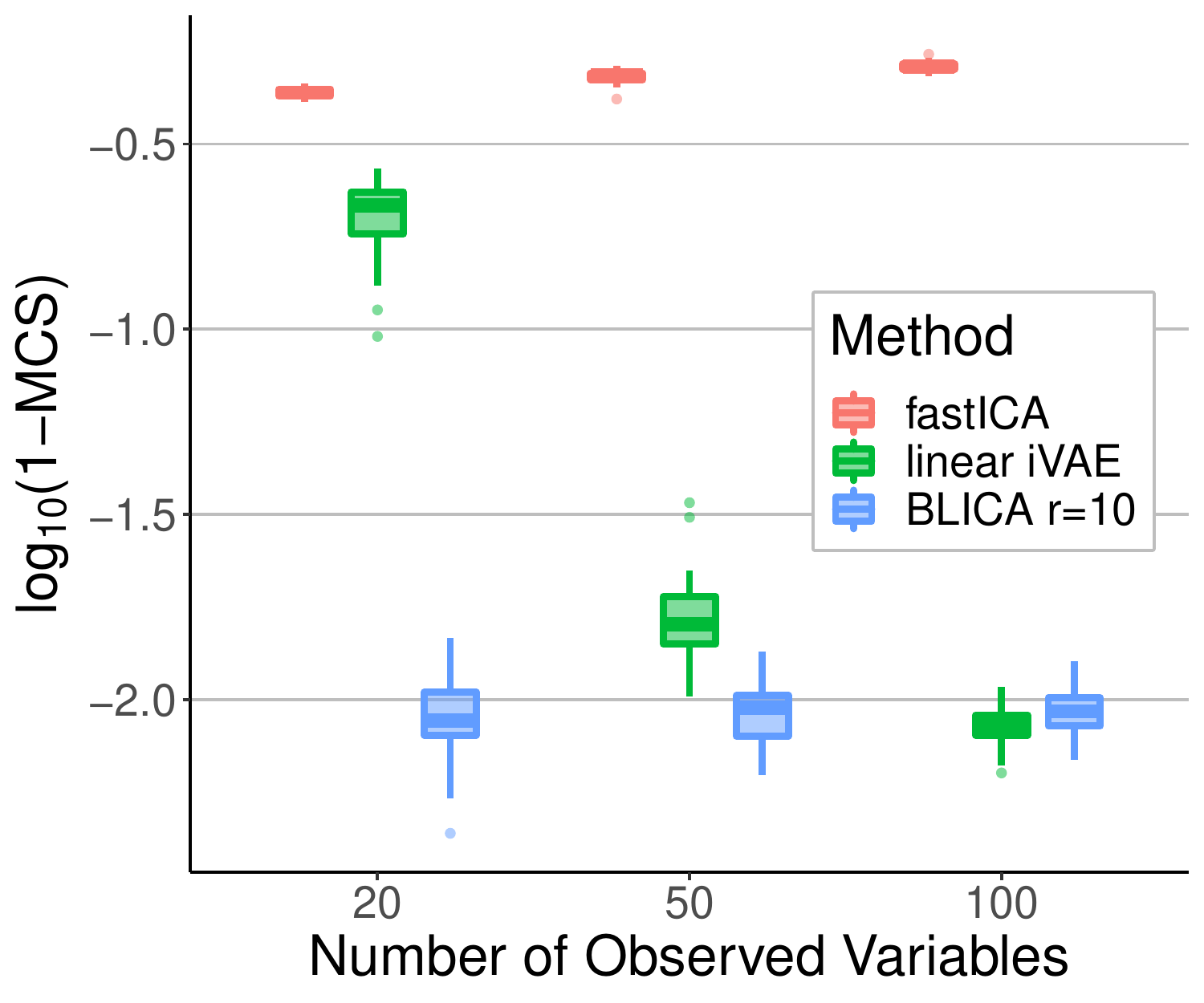}\;\includegraphics[scale=0.36]{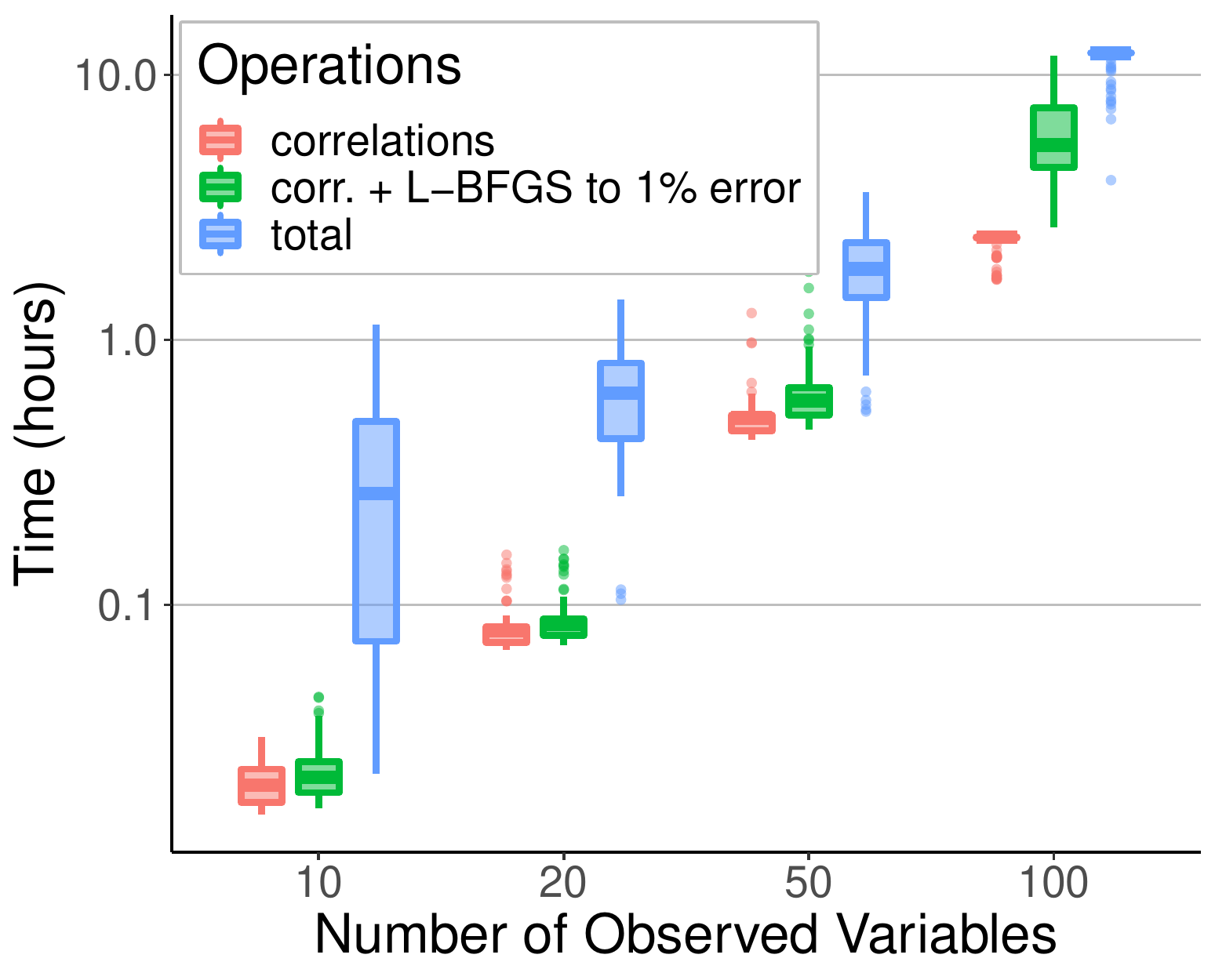}
\caption{Scalability.
Left: equal number of sources and observed variables. Center: 10 sources. 
Each box is based on thirty 40-segment datasets with 1000 samples per segment.
Right: Running times of the  steps of \texttt{BLICA} (Algorithm~1).
\label{fig:highdim} }
\end{figure*}

\textbf{Results.}
 Figure~\ref{fig:10observed} (left) shows the result for 10 observed variables and 10 sources. \texttt{BLICA} clearly outperforms others consistently improving with increasing sample size. 
With smaller dimensions, 6 observed variables and 6 sources in Figure~\ref{fig:10observed} (center), BLICA needs more samples to achive similar MCS. However, with fewer sources fewer samples are needed:  Figure~\ref{fig:10observed} (right) shows that for 6 observed variables and 2 sources, high MCS can be obtained with only 50 samples per segments. Interestingly, \texttt{linear iVAE} performs well only with fewer sources than observations, while \texttt{fastICA} is not able to reliably estimate the mixing matrix from binary data. Unfortunately, \texttt{full MLE} cannot perform sufficiently many optimization steps within the time limit of 2h even with 6 observed variables in Figure~\ref{fig:10observed} (center).

\textbf{Scalability.} Figure~\ref{fig:highdim} assesses the performance in higher dimensions over data sets with 40 1000-sample segments, thirty for each $n$. 
Only \texttt{BLICA} can estimate the mixing matrix with equal number of observed variables equals and sources in Figure~\ref{fig:highdim} (left). When the number of sources is fixed to 10 in Figure~\ref{fig:highdim} (center), also \texttt{linear iVAE} shows improving performance with increasing number of observed variables. Finally, Figure~\ref{fig:highdim} (right) shows the running time performance of \texttt{BLICA} (Algorithm~1) on the previous runs. The estimation of the quadratic number of correlations  starts taking considerable time with 100 observed variables. L-BFGS is relatively quick in solving the optimization problem to a solution close to the final result (i.e. 1\% lower MCS), then still gradually improving.

\section{RELATED WORK} \label{sec:related}

Our research connects particularly to the following earlier and more recent literature.
\citet{Himberg01}
consider binary observed vectors $\x$ and binary sources $\z$, so that the ICA mixing model is given by the Boolean expression $ x_{i}=\bigvee_{j=1}^{n_z} a_{i j} \wedge z_{j}$.
They show that this Boolean OR mixing can be approximated by a linear mixing model followed by a unit step function. Thus, they propose to estimate the model by ordinary ICA, and obtain reasonable results when the data is very sparse.
Similarly, \citet{nguyen10} studied binary ICA with OR mixtures by defining a disjunctive generative model.
They prove identifiability  
and propose an algorithm without  continuous-valued approximations.

\citet{kaban06} proposed a model where 
continuous sources follow a Beta distribution, followed by a binary observation model.
While their approach is related to ours, their latent variables are restricted to a finite interval, and they estimate the model using variational approximation which is unlikely to yield consistent estimators.
Discrete ICA has further been approached by extensions of LDA where the topic intensities are mutually independent \citep{Podosinnikova15, Buntine05, canny04}. Although their identifiability guarantees are limited \citep{podosinnikova16}, their method has the advantage of allowing for discrete data. 
\cite{lee} consider PCA 
employing a binarized Gaussian model.

Finally, we note that the very idea of estimating latent variable models by non-stationarity, originating in \citep{Matsuoka95,Pham01}, has been recently increasingly used in estimating generative models \citep{tcl,Khemakhem2019} as well as for causal discovery \citep{zhang2017causal,Monti19UAI}, even in deep learning. Automatically estimating the segment index by a HMM has been further proposed by \citet{halva2020hidden}.
Instead of the wide-spread idea of joint diagonalization of covariance matrices~\citep{belouchrani1997blind,Tsatsanis}, we used correlation matrices without explicit diagonalization criteria; related work on diagonalizing correlation matrices can be found in \citep{corrdiag}.

\section{CONCLUSION}

We presented a model for ICA of binary data which is based on a linear latent mixing model and non-stationarity of the sources. 
We investigated the identifiability, showing some surprising indeterminacies not present in ordinary ICA, including the fact that in the two-variable case the model cannot be identified. We believe that our identifiability results, theoretical and empirical, will be useful in future research on binary ICA. Based on our approach using a Gaussian link function, the likelihood can be obtained in closed form although the Gaussian cumulative distribution function is still computationally heavy. These advances allowed for a practical method \texttt{BLICA} that combines maximum likelihood estimation and moment-matching; it was shown to be  applicable in higher dimensions while still empirically showing consistent behaviour.
As future work, we aim to generalize from binary to discrete variables, consider parallelized approaches for scaling up full MLE estimation, and investigate the potential of the new learning algorithm in applications.

\subsubsection*{Acknowledgements}

The first author was supported by the Academy of Finland under grant 315771. The second author acknowledges funding from Samsung Electronics Co., Ltd. (at Mila). The third author acknowledges funding from the Academy of Finland and a CIFAR Fellowship.

\bibliographystyle{plainnat}
\bibliography{paper}

\onecolumn

\appendix

\section{Proof of the Row Order Indeterminacy (Theorem 1)}
\setcounter{theorem}{0}
\setcounter{figure}{0}
\setcounter{equation}{0}

\begin{theorem}
If the row order of the 2-by-2 mixing matrix $\A$ of a binary ICA model is reversed, then the source means $\mub^u_\z$ and variances $\Sigmab^u_\z$ can be adjusted such that the implied distributions for the observed binary $\x^u$ remain identical.
\end{theorem}
\begin{proof}
Consider two binary ICA models
$\mathcal{M}=(\A,\{\mub^u_\z\}_u,\{\Sigmab^u_\z\}_u)$ and $\hat{\mathcal{M}}=(\hat{\A},\{\hat{\mub}^u_\z\}_u,\{\hat{\Sigmab}^u_\z\}_u)$ that have $n=2$ observed variables. Let $\hat{\A}$ be $\A$ with rows switched. We define parameters $\{\hat{\mub}^u_\z\}_u$, $\{\hat{\Sigmab}^u_\z\}_u$ and scaling matrices $\{\Qb^u\}_u$ such that Equations~10 and~11 in the main paper are satisfied and therefore the binary distributions implied by both models for each segment are identical. First, let 
$\hat{\Sigmab}^u_\z=\Sigmab^u_\z$. This and the row switching of $\A$ means that the covariance matrix of $\q^u$ has just the order switched:
$\hat{\Sigmab}_{ \q}^u[2,2]= \Sigmab_{ \q}^u[1,1]$,
$\hat{\Sigmab}_{ \q}^u[1,1]= \Sigmab_{ \q}^u[2,2]$, $\hat{\Sigmab}_{ \q}^u[1,2]= \Sigmab_{ \q}^u[1,2]$ (since this matrix is symmetric).
The equations implied by Equation~9 in the main paper for each $u$ are:
\begin{eqnarray*} \Qb^u[1,1]^2 \Sigmab_{ \q}^u[1,1] &=&\Sigmab_{ \q}^u[2,2], \\
\Qb^u[2,2]^2 \Sigmab_{ \q}^u[2,2]&= &\Sigmab_{ \q}^u[1,1],\\ 
\Qb^u[1,1] \cdot \Qb^u[2,2]\cdot \Sigmab_{ \q}^u[1,2]&=& \Sigmab_{ \q}^u[1,2].
\end{eqnarray*}
These can be solved by setting 
\begin{eqnarray*}
\Qb^u[1,1]&=&\sqrt{\Sigmab_{ \q}^u[2,2]/\Sigmab_{ \q}^u[1,1]}, \\
\Qb^u[2,2] &=&\sqrt{\Sigmab_{ \q}^u[1,1]/\Sigmab_{ \q}^u[2,2]}. 
\end{eqnarray*}
Finally, solve for $\hat{\mub}_\q^u$ from Equation~10 since $\A,\hat{\A},\Qb^u$ are invertible. 

\end{proof}

\section{Proof of the Correlation Identifiability (Theorem 2)}

\begin{theorem}
Two binary ICA models imply different distributions for binary observations $\x^u$ (in a given segment $u$) if the correlation matrices for $\q^u$ are not equal.
\end{theorem}

\iftrue
We will first present the result assuming zero means for $\q^u$ since it is more approachable to the reader. Appendix Figure~1 explains this case visually. The full technical proof is given afterwards. Appendix Figures~2 and~3 explain the general case visually.

\begin{proof}[Proof assuming zero means]

We can focus here on bivariate models as the multivariate normal for $\q^u$ can be straightforwardly marginalized to the bivariate case.
Suppose the two models respectively imply:
\begin{equation}
\q^u \sim \mathcal{N}( \mathbf{0} , \Sigmab_{ \q}^u),\quad \hat{\q}^u \sim \mathcal{N}( \mathbf{0} , \hat{\Sigmab}_{ \q}^u),\label{q_eq}\end{equation}

Due to Equations~10 and~11 in the main paper we can also assume we are dealing with ``standardized'' models where the diagonals of the covariances are units for both models. 

The correlation/covariance matrices for $\q$ and $\hat{\q}$ are:
$$
\Sigmab_{\q}^u=
 \left( \begin{array}{ccc} 
 1 &\alpha \\
\alpha & 1 \end{array}\right),\quad 
\hat{\Sigmab}_{\q}^u=
 \left( \begin{array}{ccc} 
 1 &\beta \\
\beta & 1 \end{array}\right).
$$
We study the difference in the implied binary distribution by the two models by creating the Gaussian distributions for $\q^u$ and $\hat{\q}^u$ from a single standard multivariate Gaussian source. The distributions can be formed from a standard normal $\mathbf{n}\sim N(\mathbf{0}, \mathbf{I})$, for example by multiplying with matrices
$$
\A= \left( \begin{array}{ccc} 
 1 & 0 \\
\alpha
& \sqrt{1-\alpha^2}\end{array}\right), \quad \hat{\A}= \left( \begin{array}{ccc} 
 1 & 0 \\
\beta
& \sqrt{1-\beta^2}\end{array}\right)
$$
such that
$$
\q= \A \mathbf{n}, \quad \hat{\q} = \hat{\A} \mathbf{n}.
$$
 We will assume $\alpha > \beta$ without loss of generality. Let's look at which values for $\n$ result in different assignments for the binary variables. Recall that the assignment is determined deterministically by the quadrant $\q^u$ and $\hat{\q}^u$ land in. Intuitively, the model with higher correlation $\alpha$ implies more similar values for the binary variables. For the $\alpha$-model (with $\A$):
$$
x^u_1 =  \begin{cases} 0, & \text{ if }n_1 > 0 \\
1, & \text{ if }n_1 < 0
\end{cases} , \quad x_2^u = \begin{cases} 0, & \text{ if }-n_2 < \frac{\alpha}{\sqrt{1-\alpha^2}}n_1\\
1, & \text{ if }-n_2 > \frac{\alpha}{\sqrt{1-\alpha^2}}n_1
\end{cases}.
$$
And for the $\beta$-model (with $\hat{\A}$):
$$
x^u_1 =  \begin{cases} 
0, & \text{ if }n_1 > 0\\
1, & \text{ if }n_1 < 0
\end{cases}, \quad 
x^u_2 =  \begin{cases} 
0, & \text{ if }-n_2 < \frac{\beta}{\sqrt{1-\beta^2}}n_1\\
1, & \text{ if }-n_2 > \frac{\beta}{\sqrt{1-\beta^2}}n_1
\end{cases}.
$$
Note that due to the construction both models agree on the value of the binary variable $x^u_1$.

\begin{figure*}
    \centering
    \includegraphics[scale=0.75]{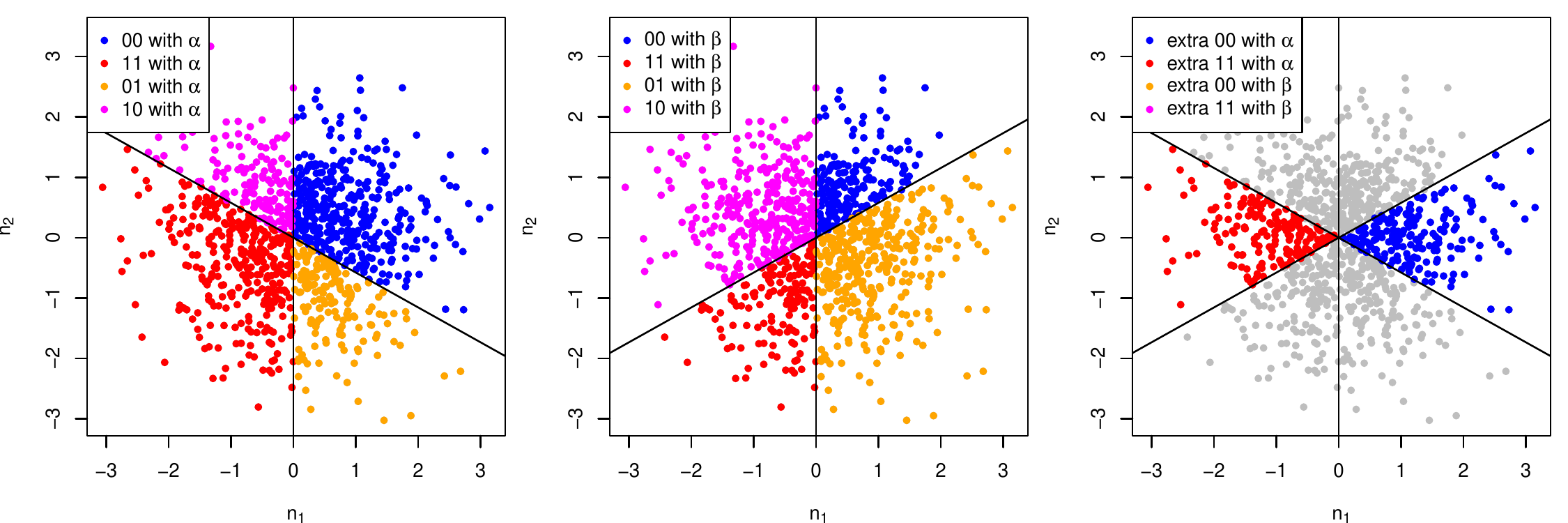}
\caption{Bivariate standard normal $\n$ and colors indicating which binary assignments are implied with $\alpha=0.5$ (left) and with $\beta=-0.5$ (center). For this case with zero means, with higher correlation value $\alpha$ we get more $00$ and $11$ assignments as can be seen from the rightmost plot. Grey points in the rightmost plot do not imply extra 00 or 11 assignments with either correlation value and are irrelevant for the proof. 
\label{fig:thm_plot1} }
\end{figure*}

With $\beta$ we get extra assignments such that $x^u_1=x^u_2=0$ if:
\begin{eqnarray}
n_1 &>& 0 \quad \text{AND} \quad 
-n_2 \in \left[\frac{\alpha}{\sqrt{1-\alpha^2}}n_1,\frac{\beta}{\sqrt{1-\beta^2}}n_1 \right] \label{eq:full1zero}
\end{eqnarray} 
Since  $\alpha > \beta$ and $x/\sqrt{1-x^2}$ is increasing, the interval for $n_2$ is empty, and no $\n$ implies $x^u_1=x^u_2=0$ with $\beta$ if not with $\alpha$. Suppose $\n$ is such that
\begin{eqnarray*}
n_1 &>& 0
\quad  \text{ AND } \quad
-n_2 \in \left[\frac{\beta}{\sqrt{1-\beta^2}}n_1, \frac{\alpha}{\sqrt{1-\alpha^2}}n_1 \right].
\end{eqnarray*}
The binary values implied are $x^u_1=x^u_2=0$ with $\alpha$ and $x^u_1=0,x^u_2=1$ with $\beta$.
Since  $\alpha > \beta$ and $x/\sqrt{1-x^2}$ is increasing, the interval for $n_2$ has non-zero measure.
Thus there is a nonzero measure for obtaining extra $x^u_1=x^u_2=0$ with $\alpha$. See Figure~\ref{fig:thm_plot1} for pictorial representation of the situation when $\alpha=0.5$, $\beta=-0.5$. 
\end{proof}

\begin{proof}
We can focus here on bivariate models as the multivariate normal for $\q^u$ can be straightforwardly marginalized to the bivariate case.
Suppose the two models respectively imply:
\begin{equation}
\q^u \sim \mathcal{N}( \mub_{ \q}^u , \Sigmab_{ \q}^u),\quad \hat{\q}^u \sim \mathcal{N}( \hat{\mub}_{ \q}^u , \hat{\Sigmab}_{ \q}^u),\label{q_eq2}\end{equation}
Then the marginals are:
\begin{eqnarray*}
P(x_1^u=1)&=&\Phi(0|\mu_1,\sigma_1^2)=\Phi\left(-\frac{\mu_1}{\sigma_1}|0,1\right),\\
P(\hat{x}_1^u=1)&=&\Phi(0|\hat{\mu}_1,\hat{\sigma}_1^2)=\Phi\left(-\frac{\hat{\mu}_1}{\hat{\sigma}_1}|0,1\right),\end{eqnarray*}
where $\mu_1$, $\hat{\mu}_1$, $\sigma_1$, and $\hat{\sigma}_1$ denote the parameters in Equation~\ref{q_eq2}. For the models to imply the same distributions the marginals need to be the same. The same applies for $x_2^u$ with parameters $\mu_2$, $\hat{\mu}_2$, $\sigma_2$, and $\hat{\sigma}_2$. Since $\Phi$ is monotonically increasing, we can assume from here on:
$$
\mu_1\hat{\sigma}_1 = \hat{\mu}_1\sigma_1, \quad
 \mu_2 \hat{\sigma}_2 =\hat{\mu}_2 \sigma_2.
$$
Due to Equations~10 and~11 in the main paper we can also assume we are dealing with ``standardized'' models where the diagonals of the covariances are units for both models. We get:
$$
\mu_1 = \hat{\mu}_1,\quad \mu_2 =\hat{\mu}_2,\quad
\hat{\sigma}_1 = \sigma_1 =  \hat{\sigma}_2 = \sigma_2 = 1.
$$

The correlation/covariance matrices for $\q$ and $\hat{\q}$ are:
$$
\Sigmab_{ \q}^u=
 \left( \begin{array}{ccc} 
 1 &\alpha \\
\alpha & 1 \end{array}\right),\quad 
\hat{\Sigmab}_{ \q}^u=
 \left( \begin{array}{ccc} 
 1 &\beta \\
\beta & 1 \end{array}\right)
$$
We study the difference in the implied binary distribution by the two models by creating the Gaussian distributions for $\q^u$ and $\hat{\q}^u$ from a single standard multivariate Gaussian source. The distributions can be formed from a standard normal $\mathbf{n}\sim \mathcal{N}(\mathbf{0}, \mathbf{I})$, for example by multiplying with matrices$$
\A= \left( \begin{array}{ccc} 
 1 & 0 \\
\alpha
& \sqrt{1-\alpha^2}\end{array}\right), \quad \hat{\A}= \left( \begin{array}{ccc} 
 1 & 0 \\
\beta
& \sqrt{1-\beta^2}\end{array}\right)
$$
such that 
$$
\q= \A \mathbf{n}+\mub, \quad \hat{\q} = \hat{\A} \mathbf{n}+\hat{\mub},
$$
where $\mub=\hat{\mub}$ due to the earlier. We will assume $\alpha > \beta$ without loss of generality. Let's look at which values for $\n$ result in different assignments for the binary variables. Recall that the assignment is determined deterministically by the quadrant $\q^u$ and $\hat{\q}^u$ land in. Intuitively, the model with higher correlation $\alpha$ implies more similar values for the binary variables. For the $\alpha$ model:
$$
x_1^u =  \begin{cases} 0, & \text{ if }n_1 > -\mu_1 \\
1, & \text{ if }n_1 < -\mu_1
\end{cases} , \quad x_2^u = \begin{cases} 0, & \text{ if }-n_2 < \frac{\alpha}{\sqrt{1-\alpha^2}}n_1+\frac{1}{\sqrt{1-\alpha^2}}\mu_2\\
1, & \text{ if }-n_2 > \frac{\alpha}{\sqrt{1-\alpha^2}}n_1+\frac{1}{\sqrt{1-\alpha^2}}\mu_2
\end{cases}.
$$
And for the $\beta$ model:
$$
\hat{x}_1^u =  \begin{cases} 
0, & \text{ if }n_1 > -\mu_1 \\
1, & \text{ if }n_1 < -\mu_1
\end{cases}, \quad 
\hat{x}_2^u =  \begin{cases} 
0, & \text{ if }-n_2 < \frac{\beta}{\sqrt{1-\beta^2}}n_1+\frac{1}{\sqrt{1-\beta^2}}\mu_2\\
1, & \text{ if }-n_2 > \frac{\beta}{\sqrt{1-\beta^2}}n_1+\frac{1}{\sqrt{1-\beta^2}}\mu_2
\end{cases}.
$$
Due to the construction both models agree on the value of the binary variable $x_1^u$.

In the zero-mean case presented above, we got more 00 \emph{and} 11 assignments with the higher correlation $\alpha$ than with the lower correlation $\beta$ (Figure~1). Here we can only prove that we always get more 00 \emph{or} 11 assignments, since changing the mean complicates matters (Figures~2 and~3). This is still enough for showing that the distributions are different. First, we show that the lower correlation $\beta$ cannot give extra 00 \emph{and} 11 assignments in comparison to $\alpha$ (separately for positive and negative $\alpha$).

\paragraph{Case $\alpha > 0$} With $\beta$ we get additional assignments such that $x_1^u=x_2^u=0$ if:
\begin{eqnarray}
n_1 > -\mu_1 \quad \text{AND} \quad 
-n_2 \in \left[\frac{\alpha}{\sqrt{1-\alpha^2}}n_1+\frac{1}{\sqrt{1-\alpha^2}}\mu_2,\frac{\beta}{\sqrt{1-\beta^2}}n_1+\frac{1}{\sqrt{1-\beta^2}}\mu_2 \right] \label{eq:full1} 
\end{eqnarray} 
Replacing $n_1$ with smaller $-\mu_1$ in the lower bound gives a necessary condition for this:
\begin{eqnarray}
-n_2 
 \in \left[-\frac{\alpha}{\sqrt{1-\alpha^2}}\mu_1+\frac{1}{\sqrt{1-\alpha^2}}\mu_2, \frac{\beta}{\sqrt{1-\beta^2}}n_1+\frac{1}{\sqrt{1-\beta^2}}\mu_2 \right] \label{eq:nec1}
\end{eqnarray}
With $\beta$ we get additional assignments $x^u_1=x^u_2=1$ if:
\begin{eqnarray}
n_1 < -\mu_1 \quad \text{AND} \quad
-n_2 \in \left[\frac{\beta}{\sqrt{1-\beta^2}}n_1+\frac{1}{\sqrt{1-\beta^2}}\mu_2,\frac{\alpha}{\sqrt{1-\alpha^2}}n_1+\frac{1}{\sqrt{1-\alpha^2}}\mu_2\right] \label{eq:full2}
\end{eqnarray}
Replacing $n_1$ with larger $-\mu_1$ in the upper bound gives a necessary condition:
\begin{eqnarray}
-n_2 \in \left[\frac{\beta}{\sqrt{1-\beta^2}}n_1+\frac{1}{\sqrt{1-\beta^2}}\mu_2,-\frac{\alpha}{\sqrt{1-\alpha^2}}\mu_1+\frac{1}{\sqrt{1-\alpha^2}}\mu_2\right]\label{eq:nec2}
\end{eqnarray}
Since the lower bound of  Equation~\ref{eq:nec1} matches the upper bound of Equation~\ref{eq:nec2}, and the bound is constant with respect to $\n$, both necessary conditions cannot be fulfilled given any fixed model. Therefore, the conditions the latter were necessary to, Equation~\ref{eq:full1} and Equation~\ref{eq:full2} respectively, will not be satisfied either for any fixed model. Note that either Equation~\ref{eq:full1} or Equation~\ref{eq:full2} can be satisfied alone.

\paragraph{Case $\alpha < 0$} Also $\beta <0$ here. With $\beta$ we get additional assignments such that $x^u_1=x^u_2=0$ if:
\begin{eqnarray}
n_1 > -\mu_1 \quad \text{AND} \quad -n_2
\in \left[\frac{\alpha}{\sqrt{1-\alpha^2}}n_1+\frac{1}{\sqrt{1-\alpha^2}}\mu_2,\frac{\beta}{\sqrt{1-\beta^2}}n_1+\frac{1}{\sqrt{1-\beta^2}}\mu_2 \right] \label{eq:full1neg}
\end{eqnarray} 
Replacing $\beta n_1$ with larger $-\beta \mu_1$ in the upper bound gives a necessary condition for this is:
\begin{eqnarray}
-n_2
\in \left[\frac{\alpha}{\sqrt{1-\alpha^2}}n_1+\frac{1}{\sqrt{1-\alpha^2}}\mu_2,-\frac{\beta}{\sqrt{1-\beta^2}}\mu_1+\frac{1}{\sqrt{1-\beta^2}}\mu_2 \right] \label{eq:nec1neg}
\end{eqnarray}
With $\beta$ we get additional assignments $x^u_1=x^u_2=1$ if:
\begin{eqnarray}
n_1 < -\mu_1 \quad \text{AND} \quad
-n_2 \in \left[\frac{\beta}{\sqrt{1-\beta^2}}n_1+\frac{1}{\sqrt{1-\beta^2}}\mu_2,\frac{\alpha}{\sqrt{1-\alpha^2}}n_1+\frac{1}{\sqrt{1-\alpha^2}}\mu_2 \right] \label{eq:full2neg} 
\end{eqnarray}
Replacing $\beta n_1$ with smaller $-\beta \mu_1$ in the lower bound gives a necessary condition:
\begin{eqnarray}
-n_2 \in \left[-\frac{\beta}{\sqrt{1-\beta^2}}\mu_1+\frac{1}{\sqrt{1-\beta^2}}\mu_2,\frac{\alpha}{\sqrt{1-\alpha^2}}n_1+\frac{1}{\sqrt{1-\alpha^2}}\mu_2 \right]\label{eq:nec2neg}
\end{eqnarray}
Since the upper bound of  Equation~\ref{eq:nec1neg} matches the lower bound of Equation~\ref{eq:nec2neg}, and the bound is constant with respect to $\n$, both necessary conditions cannot be fulfilled given any fixed model. Therefore the conditions the previous were respectively necessary to, Equation~\ref{eq:full1neg} and Equation~\ref{eq:full2neg}, will not be satisfied either for any fixed model. Note that either Equation~\ref{eq:full1neg} or Equation~\ref{eq:full2neg} can be satisfied alone.

\begin{figure*}
    \centering
    \includegraphics[scale=0.75]{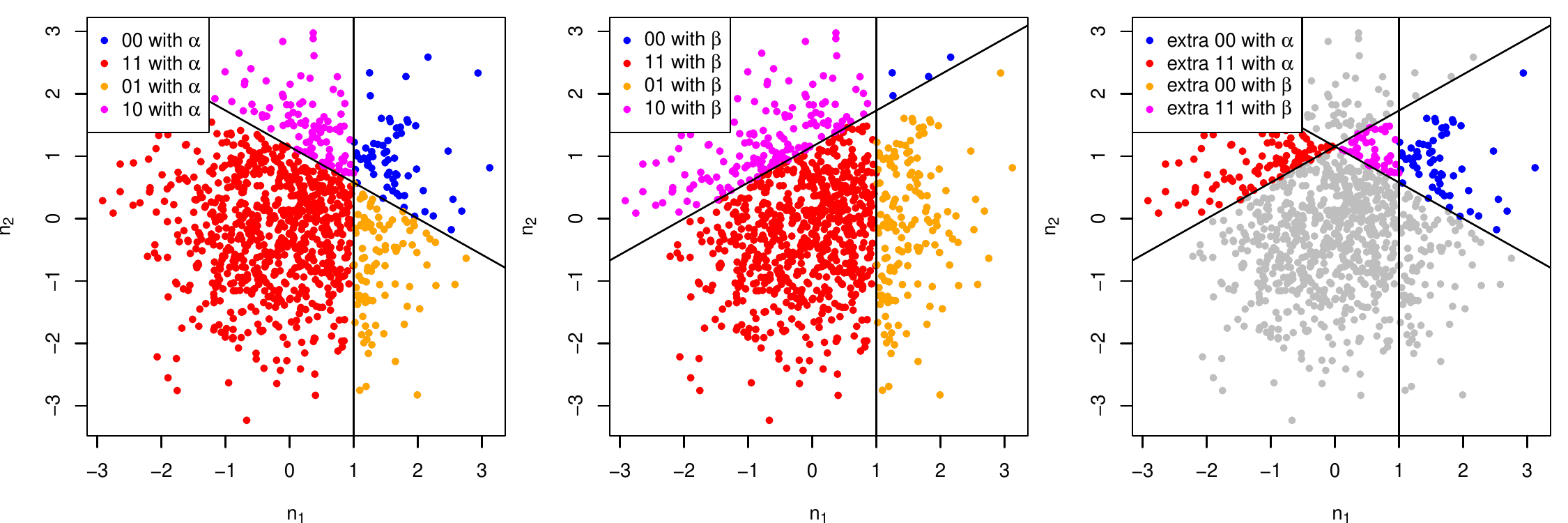}
\caption{Bivariate standard normal $\n$ and colors indicating which binary assignments are implied with $\alpha=0.5$ (left) and with $\beta=-0.5$ (center). For this case with $\mu_1=-1,\mu_2=-1$, with higher correlation value $\alpha$ we (provably) get more $00$ assignments as can be seen from the rightmost plot. Grey points in the rightmost plot do not imply extra 00 or 11 assignments with either correlation value and are irrelevant for the proof. \label{fig:thm_plot3} }
\end{figure*}

\paragraph{Extra 00 with $\alpha$} 
Suppose Equation~\ref{eq:full1} or Equation~\ref{eq:full1neg} is not satisfied. This means that no $\n$ implies $x^u_1=x^u_2=0$ with $\beta$ if not with $\alpha$. Suppose $\n$ is such that
\begin{eqnarray*}
n_1 &>& \max \left(-\mu_1, 
\mu_2 \left(\frac{1}{\sqrt{1-\beta^2}} - \frac{1}{\sqrt{1-\alpha^2}}\right) \big/ \left(\frac{\alpha}{\sqrt{1-\alpha^2}}-\frac{\beta}{\sqrt{1-\beta^2}}\right) 
\right)  \text{ and } \\
-n_2 &\in& \left[\frac{\beta}{\sqrt{1-\beta^2}}n_1+\frac{1}{\sqrt{1-\beta^2}}\mu_2, \frac{\alpha}{\sqrt{1-\alpha^2}}n_1+\frac{1}{\sqrt{1-\alpha^2}}\mu_2 \right].
\end{eqnarray*}
The binary values implied are $x^u_1=x^u_2=0$ with $\alpha$ and $x^u_1=0,x^u_2=1$ with $\beta$.
Furthermore, the following shows that interval for $-n_2$ has non-zero measure. The first multiplication is permitted as the $x/\sqrt{1-x^2}$ is increasing and $\alpha > \beta$.
\begin{eqnarray*}
n_1 &>&
\mu_2\left(\frac{1}{\sqrt{1-\beta^2}} - \frac{1}{\sqrt{1-\alpha^2}}\right)/\left(\frac{\alpha}{\sqrt{1-\alpha^2}}-\frac{\beta}{\sqrt{1-\beta^2}}\right)\quad || \cdot \left(\frac{\alpha}{\sqrt{1-\alpha^2}}-\frac{\beta}{\sqrt{1-\beta^2}}\right) \\
\left(\frac{\alpha}{\sqrt{1-\alpha^2}}-\frac{\beta}{\sqrt{1-\beta^2}}\right) n_1 &>&
\mu_2\left(\frac{1}{\sqrt{1-\beta^2}} - \frac{1}{\sqrt{1-\alpha^2}}\right)\\
\frac{\alpha}{\sqrt{1-\alpha^2}}n_1  &>&
\frac{\beta}{\sqrt{1-\beta^2}}n_1+
\mu_2\left(\frac{1}{\sqrt{1-\beta^2}} - \frac{1}{\sqrt{1-\alpha^2}}\right)\\
\frac{\alpha}{\sqrt{1-\alpha^2}}n_1+\frac{1}{\sqrt{1-\alpha^2}}\mu_2
&>&
\frac{\beta}{\sqrt{1-\beta^2}}n_1+
\mu_2\left(\frac{1}{\sqrt{1-\beta^2}} - \frac{1}{\sqrt{1-\alpha^2}}\right)+\frac{1}{\sqrt{1-\alpha^2}}\mu_2 \\
&=& \frac{\beta}{\sqrt{1-\beta^2}}n_1+\frac{1}{\sqrt{1-\beta^2}}\mu_2.
\end{eqnarray*}
Thus there is a nonzero measure for obtaining extra $x^u_1=x^u_2=0$ with $\alpha$. See Figure~\ref{fig:thm_plot3} for pictorial representation of the situation when $\alpha=0.5$, $\beta=-0.5$, $\mu_1=-1,\mu_2=-1$.

\begin{figure*}
    \centering
    \includegraphics[scale=0.75]{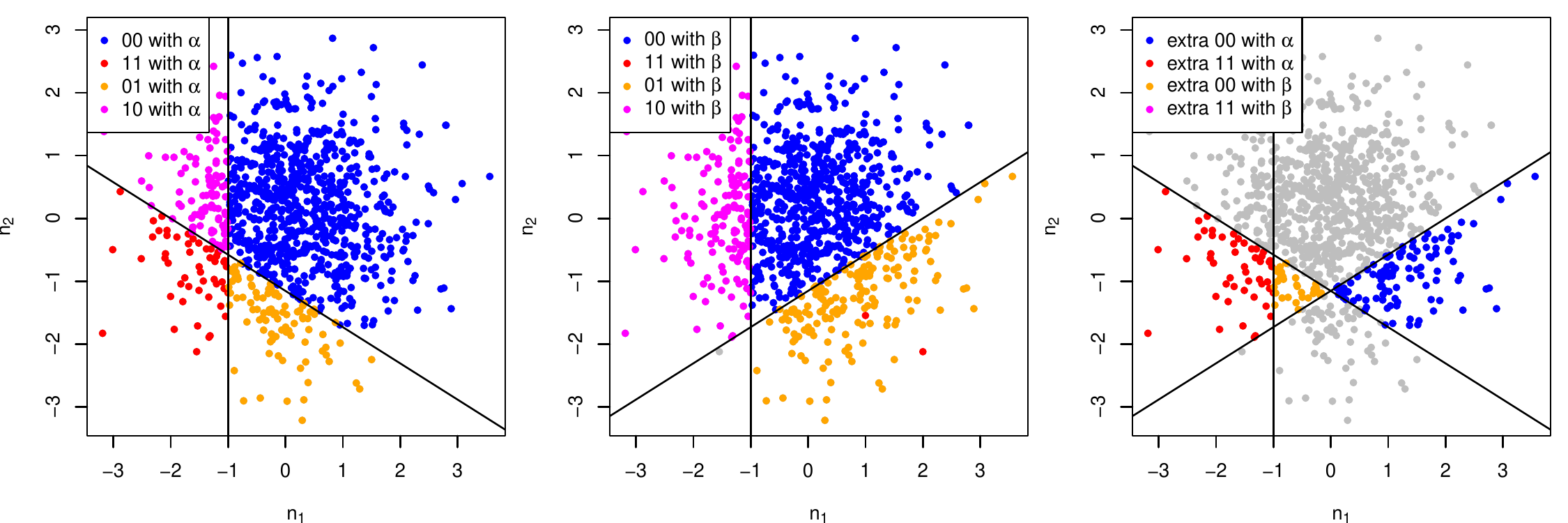}
\caption{Bivariate standard normal $\n$ and colors indicating which binary assignments are implied with $\alpha=0.5$ (left) and with $\beta=-0.5$ (center). For this case with $\mu_1=1,\mu_2=1$, with higher correlation value $\alpha$ we (provably) get more $11$ assignments as can be seen from the rightmost plot. Grey points in the rightmost plot do not imply extra 00 or 11 assignments with either correlation value and are irrelevant for the proof.  \label{fig:thm_plot2} }
\end{figure*}

\paragraph{Extra 11 with $\alpha$} 
Suppose Equation~\ref{eq:full2} or Equation~\ref{eq:full2neg} is not satisfied. This means that no $\n$ implies $x^u_1=x^u_2=1$ with $\beta$ if not with $\alpha$.
Suppose $\n$ is such that
\begin{eqnarray*}
n_1 &<& \min \left(-\mu_1,  
\mu_2(\frac{1}{\sqrt{1-\beta^2}} - \frac{1}{\sqrt{1-\alpha^2}})/(\frac{\alpha}{\sqrt{1-\alpha^2}}-\frac{\beta}{\sqrt{1-\beta^2}}) 
\right)  \text{ and }\\
-n_2 &\in& \left[ \frac{\alpha}{\sqrt{1-\alpha^2}}n_1+\frac{1}{\sqrt{1-\alpha^2}}\mu_2, \frac{\beta}{\sqrt{1-\beta^2}}n_1+\frac{1}{\sqrt{1-\beta^2}}\mu_2 \right].
\end{eqnarray*}
 The binary values implied are $x^u_1=x^u_2=1$ with $\alpha$ and $x^u_1=1,x^u_2=0$ with $\beta$.
Furthermore, the following shows that interval for $-n_2$ has non-zero measure. The first multiplication is permitted as the $x/\sqrt{1-x^2}$ is increasing and $\alpha > \beta$.
\begin{eqnarray*}
n_1 &<& 
\mu_2\left(\frac{1}{\sqrt{1-\beta^2}} - \frac{1}{\sqrt{1-\alpha^2}}\right) \big/ \left(\frac{\alpha}{\sqrt{1-\alpha^2}}-\frac{\beta}{\sqrt{1-\beta^2}}\right) \quad || \cdot \left(\frac{\alpha}{\sqrt{1-\alpha^2}}-\frac{\beta}{\sqrt{1-\beta^2}}\right) \\
\left(\frac{\alpha}{\sqrt{1-\alpha^2}}-\frac{\beta}{\sqrt{1-\beta^2}}\right)n_1 &<& 
\mu_2\left(\frac{1}{\sqrt{1-\beta^2}} - \frac{1}{\sqrt{1-\alpha^2}}\right)  \\
\frac{\alpha}{\sqrt{1-\alpha^2}}n_1 &<&\frac{\beta}{\sqrt{1-\beta^2}}n_1+
\mu_2 \left(\frac{1}{\sqrt{1-\beta^2}} - \frac{1}{\sqrt{1-\alpha^2}}\right)  \\
\frac{\alpha}{\sqrt{1-\alpha^2}}n_1 + \frac{1}{\sqrt{1-\alpha^2}}\mu_2
&<&
\frac{\beta}{\sqrt{1-\beta^2}}n_1+
\mu_2(\frac{1}{\sqrt{1-\beta^2}} - \frac{1}{\sqrt{1-\alpha^2}})
+\frac{1}{\sqrt{1-\alpha^2}}\mu_2 \\
&=& \frac{\beta}{\sqrt{1-\beta^2}}n_1+\frac{1}{\sqrt{1-\beta^2}}\mu_2.
\end{eqnarray*}
Thus there is a nonzero measure for obtaining extra $x^u_1=x^u_2=1$ with $\alpha$. 
See Figure~\ref{fig:thm_plot2} for pictorial representation of the situation when $\alpha=0.5$, $\beta=-0.5$, $\mu_1=1,\mu_2=1$.
\end{proof}
\section{Proof of Theorem 3}

\begin{theorem}
If two models
$\mathcal{M}$
and $\hat{\mathcal{M}}$ with $n=n_z$
imply the same correlation matrices for $\q^u$ (in a given segment)
 then the means $\mub_\z^u$ can be adjusted such that the implied binary distributions are identical. 
\end{theorem}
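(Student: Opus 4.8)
The plan is to reduce everything to the binarization indeterminacy of Section~4.1: by the discussion around Equations~\ref{eq_1} and~\ref{eq_2}, two models produce identical binary distributions as soon as there exist positive diagonal matrices $\{\Qb^u\}_u$ with $\hat{\mub}_\q^u = \Qb^u \mub_\q^u$ and $\hat{\Sigmab}_\q^u = \Qb^u \Sigmab_\q^u \Qb^u$. So it suffices to construct such $\Qb^u$ and an adjustment of the means that makes both relations hold. The hypothesis that the two models share the correlation matrix $\C_\q^u$ will supply the covariance relation almost for free, leaving only the means to be fixed.

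First I would handle the covariance. Writing the two covariances as $\Sigmab_\q^u = \D^{1/2}\C_\q^u\D^{1/2}$ and $\hat{\Sigmab}_\q^u = \hat{\D}^{1/2}\C_\q^u\hat{\D}^{1/2}$, where $\D$ and $\hat{\D}$ collect the respective variances on the diagonal and $\C_\q^u$ is the common correlation matrix, I would set $\Qb^u = \hat{\D}^{1/2}\D^{-1/2}$, which is positive and diagonal. An entrywise check gives $(\Qb^u\Sigmab_\q^u\Qb^u)_{ij} = \sqrt{\hat{D}_{ii}}\,(\C_\q^u)_{ij}\,\sqrt{\hat{D}_{jj}} = (\hat{\Sigmab}_\q^u)_{ij}$, so Equation~\ref{eq_2} holds. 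Thus equal correlations force the scaling matrices through the variance ratios, with no role yet played by the means.

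Next I would adjust the means to satisfy Equation~\ref{eq_1}. Using $\mub_\q^u = -\sqrt{\pi/8}\,\A\mub_\z^u$ and the analogous expression for $\hat{\mub}_\q^u$ from Equation~\ref{eq:muq}, the mean relation is equivalent to $\hat{\A}\hat{\mub}_\z^u = \Qb^u\A\mub_\z^u$. Here the assumption $n = n_z$ is the crucial ingredient: the $n$-by-$n_z$ matrix $\hat{\A}$ then has linearly independent columns and is square, hence invertible, so I can simply solve $\hat{\mub}_\z^u = \hat{\A}^{-1}\Qb^u\A\mub_\z^u$ segment by segment. This defines the required adjusted source means.

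With the constructed $\{\Qb^u\}_u$ and these adjusted means, both Equation~\ref{eq_1} and Equation~\ref{eq_2} hold, so the binarization indeterminacy immediately yields identical implied binary distributions. The only real obstacle is structural rather than computational: the whole argument hinges on inverting $\hat{\A}$ to recover the means, which is precisely what $n = n_z$ guarantees; in the under- or overcomplete regimes the mean equation could be underdetermined or unsolvable, so the square case is genuinely where this clean conclusion lives.
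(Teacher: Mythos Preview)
Your proposal is correct and follows essentially the same approach as the paper's proof: the paper also constructs the diagonal scaling matrices $\Qb^u$ from the diagonals (variance ratios) so that Equation~\ref{eq_2} holds, and then uses invertibility of the square mixing matrix to solve the mean equation (Equation~\ref{eq_1v}). The only cosmetic difference is that the paper phrases the last step as solving for $\mub_\z^u$ via invertibility of $\A$, whereas you solve for $\hat{\mub}_\z^u$ via invertibility of $\hat{\A}$; by the symmetry of the statement this is immaterial.
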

\begin{proof}
If the models imply sample correlations for $\q^u$ they satisfy Equation 11.
Thus determine the positive diagonal matrices $\Qb^u$ from Equation~11  in the main paper, from the diagonal. Then solve for $\mub_\z^u$ from Equation~10 in the main paper since $\A$ and $\Qb^u$ are invertible. 
Since the equations are satisfied, the implied binary distributions are identical.
\end{proof}

\section{Evaluation: Mean Cosine Similarity} \label{sec:mcs}

In the binary case, it is more relevant to evaluate the estimated \textbf{mixing matrix} than the sources, since the binarization process adds much more noise than simply adding Gaussian noise to the observations.
For this purpose, a similar procedure to mean correlation coefficent (MCC) is applied between the estimated mixing matrix and the true mixing matrix.

When there are only two components, the mixing matrix $\mathbf{A} \in \mathbb{R}^{2 \times 2}$ can be written considering its column vectors $\mathbf{A} = [\mathbf{a}_1, \mathbf{a}_2]$.
Each vector contains only two elements, so the correlation coefficient cannot be used, since $r(\mathbf{v}_1, \mathbf{v}_2) = 1 \quad \forall \ \mathbf{v}_1, \mathbf{v}_2 \in \mathbb{R}^2$. 
In addition, even if $n > 2$, the MCC is undesired because by subtracting the means of each vector, the correlation between ``shifted'' vectors is the same as if they were not shifted: $r(\mathbf{v}_1 + \mathbf{d}, \mathbf{v}_2) = r(\mathbf{v}_1, \mathbf{v}_2)$ for any $\mathbf{d} \in \mathbb{R}^2$.

Therefore, we employ the \textbf{Mean Cosine Similarity (MCS)} instead of the MCC. The MCS uses the cosine similarity -- instead of the correlation coefficient -- to determine whether the vectors of the true and estimated matrices are aligned:
\begin{equation}
    \cos( \mathbf{a}_1, \mathbf{a}_2)=
    \frac{\mathbf{a}_1 \cdot \mathbf{a}_2}{\|\mathbf{a}_1\|\|\mathbf{a}_2\|}
\end{equation}

Let us denote the $i^{\text{th}}$ column of a matrix $\mathbf{A} \in \mathbb{R}^{n \times n_s}$ as $\A[,i]$. In the MCS calculation, we aim to compare each column of $\mathbf{A}$ with each column of the estimated matrix $\hat{\mathbf{A}}$, thus getting a pair-wise cosine similarity. For simplicity, we consider a column permutation $p$ of matrix $\hat{\mathbf{A}}$ as $\hat{\mathbf{A}}[,p[i]]$. We compute the mean cosine similarity across all the columns for each permutation, and take the maximum, hence defining the MCS as:
\begin{equation}
    \text{MCS}(\mathbf{A}, \hat{\mathbf{A}}) = \max_p \left( \dfrac{1}{n_s} \sum_{i=1}^{n_s} \mid \cos(\mathbf{A}[,i], \hat{\mathbf{A}}[,p[i]]) \mid \right).
\end{equation}
Instead of actually going through the permutation, the computation can be efficiently performed via a linear assignment problem or a linear program.

\section{Variational Autoencoder for Binary Data (\texttt{linear iVAE})} 
\label{appendix_ivae}

\paragraph{Estimation}
The variational autoencoder\footnote{The notation here differs slightly from the previous in order to follow the notation in [Khemakhem \textit{et al.}, 2019] more closely.} iVAE
[Khemakhem \textit{et al.}, 2019] aims to estimate the observed data distribution $p(\mathbf{x}|\mathbf{u})=\int p(\mathbf{x}|\mathbf{z})p(\mathbf{z}|\mathbf{u})d\mathbf{z}$.
Given a dataset $\mathcal{D} = \{ (\mathbf{x}_i, \mathbf{u}_i) \}_{i}$, let $q_\mathcal{D}(\mathbf{x},\mathbf{u})$ be the empirical data distribution. The model learns by maximizing a lower bound $\mathcal{L}$ of the data log-likelihood 
\begin{equation}
    \mathbb{E}_{q_\mathcal{D}(\mathbf{x},\mathbf{u})}[ \log p_{\boldsymbol{\theta}} (\mathbf{x}|\mathbf{u})] \geq \mathcal{L}(\boldsymbol{\theta},\boldsymbol{\phi}).
\end{equation}
The loss function is: 
\begin{equation}
\begin{aligned}
\label{eq:ivae_loss}
    \mathcal{L}(\boldsymbol{\theta},\boldsymbol{\phi}) &:= \mathbb{E}_{q_\mathcal{D}(\mathbf{x},\mathbf{u})}[\mathbb{E}_{q_{\boldsymbol{\phi}}(\mathbf{z}|\mathbf{x},\mathbf{u})}[\log p_{\boldsymbol{\theta}}(\mathbf{x}, \mathbf{z}|\mathbf{u}) - \log q_{\boldsymbol{\phi}} (\mathbf{z|x,u})]] \\
    &= \mathbb{E}_{q_\mathcal{D}(\mathbf{x},\mathbf{u})}[\mathbb{E}_{q_{\boldsymbol{\phi}}(\mathbf{z}|\mathbf{x},\mathbf{u})}[\log p_{\boldsymbol{\theta}}(\mathbf{x} | \mathbf{z}, \mathbf{u})] + \mathbb{E}_{q_{\boldsymbol{\phi}}(\mathbf{z}|\mathbf{x},\mathbf{u})}[\log p_{\boldsymbol{\theta}}(\mathbf{z}|\mathbf{u})]
    -\mathbb{E}_{q_{\boldsymbol{\phi}}(\mathbf{z}|\mathbf{x},\mathbf{u})}[\log q_{\boldsymbol{\phi}}(\mathbf{z}|\mathbf{x},\mathbf{u})]].
\end{aligned}
\end{equation}

\begin{figure}[t]
    \centering
    \includegraphics[scale=0.8]{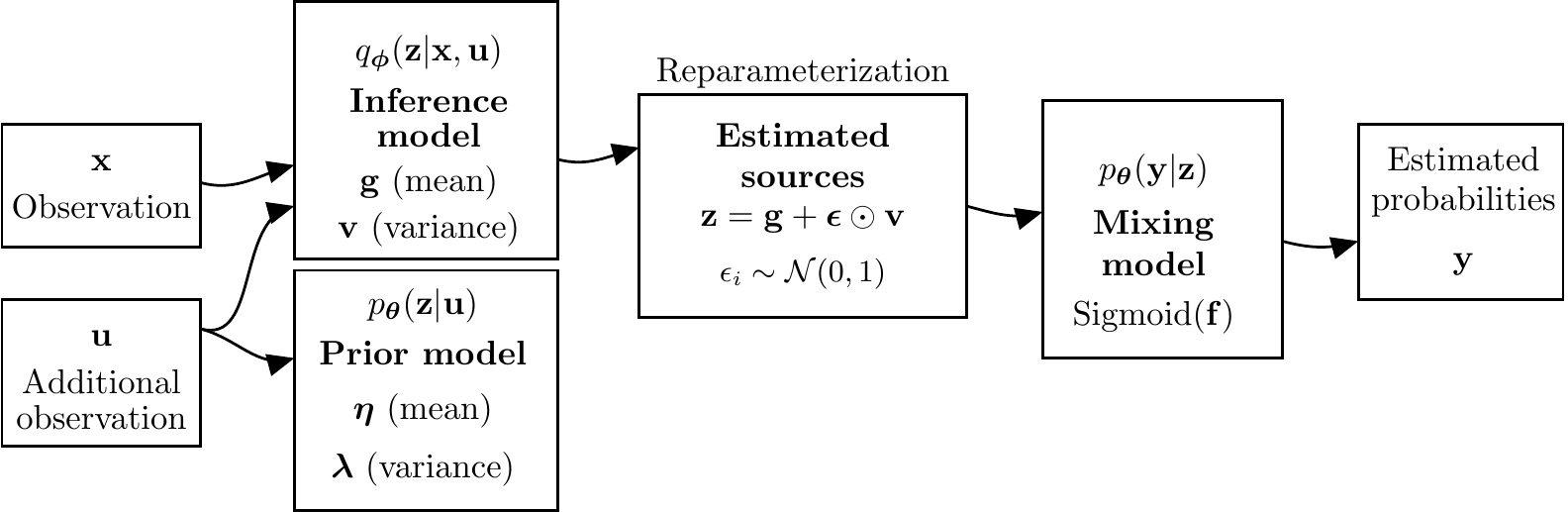}
    \caption{Binary linear iVAE illustration. In VAE terminology: the inference model is equivalent to the encoder, and the mixing model is equivalent to the decoder. The iVAE uses an additionally observed variable $\mathbf{u}$ to estimate the inference model. Additionally, the iVAE estimates a ``prior" model for such additionally observed variables. Different from the continuous iVAE, the mixing model does not model the noise explicitly. Also in contrast to the continuous iVAE, the outputs of the model are the estimated probabilities, not the estimated observations. To obtain the probability of each element being 1, a Sigmoid function is applied element-wise to the output of the mixing model. Variables in bold under the model names denote the transformations learned by the model and are described in detail in the text.}
    \label{fig:binary_ivae}
\end{figure}

To compute the loss function, the expectation over the data distribution is implemented as an average over data samples. In order to deal with expectation over $q_{\boldsymbol{\phi}}(\mathbf{z|x,u})$, we use the reparametrization trick and \emph{draw} vectors $\mathbf{z}$ from $q_{\boldsymbol{\phi}}(\mathbf{z|x,u})$. 

To further develop iVAEs for binary data--which we refer to as \texttt{linear iVAE} in this paper---, we notice that we are working with a factorized Bernoulli observational model. The loss terms developed previously in the continuous iVAE model can remain the same for the inference model and the prior model.
However, the loss term referring to the \textbf{mixing model} should be modified, since the data follows a \textbf{multivariate Bernoulli distribution}. We draw $\z^{(i)} \sim q_{\boldsymbol{\phi}}(\mathbf{z}|\mathbf{x},\mathbf{u})$ using the output of the inference model in the reparameterization trick $\z^{(i)} = \g(\x, \uu) + \mathbf{v}(\x, \uu) \odot \boldsymbol{\epsilon}^{(i)}$. Thus, the loss term relating to the mixing model can be given as:
\begin{equation}
\begin{aligned}
    \mathbb{E}_{q_{\boldsymbol{\phi}}(\mathbf{z}|\mathbf{x},\mathbf{u})}[\log p_{\boldsymbol{\theta}}(\mathbf{x} \vert \mathbf{z}, \mathbf{u})]
    &= \mathbb{E}_{q_{\boldsymbol{\phi}}(\mathbf{z}|\mathbf{x},\mathbf{u})}[\log p_{\boldsymbol{\theta}}(\mathbf{x} \vert \mathbf{z})] \approx \dfrac{1}{l} \sum_{j=1}^l \log p_{\boldsymbol{\theta}}(\mathbf{x} \vert \mathbf{z}^{(i)}) = \dfrac{1}{l} \sum_{i=1}^l \sum_{j=1}^{n} \log p_{\boldsymbol{\theta}}({x}_j | \z^{(i)}) \\
    &= \dfrac{1}{l} \sum_{i=1}^l \sum_{j=1}^{n} \left[x_j \log y_j^{(i)} + (1-x_j) \log (1-y_j^{(i)}) \right] \\
    &= \dfrac{1}{l} \sum_{i=1}^l \sum_{j=1}^{n} \log \text{Bernoulli}(x_j; y_j^{(i)}),
\end{aligned}
\end{equation}
where $y_j$ is the probability of the observation being 1, $0 \leq y_j \leq 1$, and it is modeled by applying an element-wise sigmoid function to the continuous output of the linear mixing model. Notice that $\y^{(i)}$ is a function of the estimated sources $\z^{(i)}$ drawn from the estimated posterior.
Hence, the expectation is approximated by computing the log-probability mass function of a Bernoulli distribution given such probability $y_j$.

\paragraph{Binary model}
In the model defined, all the transformations are linear, and the sources are drawn from a Gaussian distribution given their segment. Compared to the continuous iVAE, which uses nonlinear transformations in all the models, the binary model is linear and introduces changes to the mixing model and to the prior model. The prior model now estimates not only the log-variances but also the means.

When the observed variables are binary, we use a ``Bernoulli MLP'' 
[Kingma and Welling, 2014, Rezende et al., 2014]
as a decoder in the mixing model, which aims to estimate parameters from a Bernoulli distribution instead of a Normal distribution. The mixing model is modified from the continuous case by applying a sigmoid function element-wise to the output of the mixing model. In addition, in the binary case, we do not have an explicit factor accounting for the noise in the mixture, as illustrated in Figure \ref{fig:binary_ivae}.

Following, we describe the model in more detail.
First of all, we notice that for simplicity and numerical stability when modeling the variances in both the inference model and the prior model, the transformations model the log-variances, which can easily be converted to the variances via exponentiation. With this trick, even a linear transformation can suffice for modeling the log-variances, thus making the model simpler.

The \textbf{prior model} is composed of a transformation modeling the prior mean, and a transformation modeling the prior log-variance.
The prior \textbf{mean} is modeled by 
\begin{equation}
    \begin{aligned}
        \boldsymbol{\eta}: \ & \R^m \rightarrow \mathbb{R}^{n_s} \quad
              \ & \mathbf{u} \mapsto \boldsymbol{\eta}(\mathbf{u})
    \end{aligned}
\end{equation}
where $\boldsymbol{\eta}$ is an affine transformation.
So the vector of means is given by $\boldsymbol{\eta} (\mathbf{u}) = \mathbf{W}_{\eta} \mathbf{u} + \mathbf{b}_{\eta}$, with matrix weights $\mathbf{W}_{\eta} \in \mathbb{R}^{n_s \times m}$, and a bias vector $\mathbf{b}_{\eta} \in \R^{n_s}$. 
The prior \textbf{log-variance} is modeled by
\begin{equation}
    \begin{aligned}
        \boldsymbol{\lambda}: \ & \R^m \rightarrow \mathbb{R}^{n_s} \quad
                  & \mathbf{u} \mapsto \boldsymbol{\lambda}(\mathbf{u})
    \end{aligned}
\end{equation}
where $\boldsymbol{\lambda}$ is an affine transformation.
The vector of log-variances is given by $\boldsymbol{\lambda}(\mathbf{u}) = \mathbf{W}_{\lambda} \mathbf{u} + \mathbf{b}_{\lambda}$, in which $\mathbf{W}_{\lambda} \in \mathbb{R}^{n_s \times m}$ are the weights, and $\mathbf{b}_{\lambda} \in \R^{n_s}$ are the biases. Notice that $\boldsymbol{\lambda}$ is unrelated to the notation from the exponential family, since we are modeling both the means and variances.

The \textbf{mixing model} learns a transformation
\begin{equation}
    \begin{aligned}
        \f: \ & \mathbb{R}^{n_s} \rightarrow \mathbb{R}^{n} \quad
             & \z \mapsto \f(\z)
    \end{aligned}
\end{equation}
where $\f$ is a linear transformation resulting in the the continuous output $\f(\mathbf{z}) = \mathbf{W}_f \mathbf{z}$, in which $\mathbf{W}_{f} \in \mathbb{R}^{n \times n_s}$ is the matrix of weights.
Then, the probability of the estimated observed variables is given by
\begin{equation}
    \mathbf{y} = \text{Sigmoid}(\mathbf{W}_{f} \mathbf{z}).
\end{equation}
It is important to notice that each element of $\y$ is an individual probability of the particular observed variable being 1, $\{ y_i = P (x_i = 1) \}_{i=1}^{n}$.

The \textbf{inference model} has a transformation modeling the mean, and a transformation modeling the log-variance of the data. The data \textbf{mean} is modeled by
\begin{equation}
    \begin{aligned}
        \g: \ & \mathbb{R}^{n+m} \rightarrow \mathbb{R}^{n_s}  \quad
             & (\x, \uu) \mapsto \g(\x, \uu)
    \end{aligned}
\end{equation}
where $\g$ is an affine transformation.
We denote the concatenation of the vectors $\x$ and $\uu$ as $\x||\uu$.
The vector of means is given by $\g(\mathbf{x}, \mathbf{u}) = \mathbf{W}_{g} (\mathbf{x}||\mathbf{u}) + \mathbf{b}_g$, for a matrix $\mathbf{W}_{g} \in \mathbb{R}^{n_s \times (n+m)}$, and a bias vector $\mathbf{b}_g \in \R^{n_s}$.
The data \textbf{log-variance} is modeled by
\begin{equation}
    \begin{aligned}
        \vb: \ & \mathbb{R}^{n+m} \rightarrow \mathbb{R}^{n_s} \quad 
             & (\x, \mathbf{u}) \mapsto \vb(\x, \mathbf{u})
    \end{aligned}
\end{equation}
where $\vb$ is an affine transformation.
The vector of log-variances is given by $\vb(\mathbf{x}, \mathbf{u}) = \mathbf{W}_{v} (\mathbf{x}||\mathbf{u}) + \mathbf{b}_v$, where $\W_v \in \R^{n_s \times n+m}$ are the weights and $\mathbf{b}_v \in \R^{n_s}$ the biases.

\section{Further Details}

The experiments were run in computer clusters employing Intel Xeon E5-2680 v4
processors. The running times in Figure~5 (right) in the main paper (as well as all the results in all other experiments) were obtained using a single processor for a specific run. 

\end{document}